\def\eqref#1{equation~\ref{#1}}
\def\1{\bm{1}}
\DeclareMathAlphabet{\mathsfit}{\encodingdefault}{\sfdefault}{m}{sl}
\SetMathAlphabet{\mathsfit}{bold}{\encodingdefault}{\sfdefault}{bx}{n}
\theoremstyle{plain}
\newtheorem{theorem}{Theorem}[section]
\newtheorem{lemma}{Lemma}[section]
\theoremstyle{definition}
\theoremstyle{remark}
\newcommand{\add}[1]{\textcolor{black}{#1}}	
\title{Partially Frozen Random Networks\\Contain Compact Strong Lottery Tickets}
\author{\name Hikari Otsuka$^{1}$\thanks{Equal contribution. $^\dagger$ Correspondence to: Hikari Otsuka <otsuka.hikari@artic.iir.isct.ac.jp>.}\hspace{4pt}$^{, \dagger}$ \email otsuka.hikari@artic.iir.isct.ac.jp \vspace{2.ex} \\
      \name Daiki Chijiwa$^{2,*}$ \email daiki.chijiwa@ntt.com \vspace{2.ex} \\
      \name \'{A}ngel L\'{o}pez Garc\'{i}a-Arias$^{2,*}$ \email lopez@ieee.org \vspace{2.ex} \\
      \name Yasuyuki Okoshi$^{1}$ \email okoshi.yasuyuki@artic.iir.isct.ac.jp \vspace{2.ex} \\
      \name Kazushi Kawamura$^{1}$ \email kawamura@artic.iir.isct.ac.jp \vspace{2.ex} \\
      \name Thiem Van Chu$^{1}$ \email thiem@artic.iir.isct.ac.jp \vspace{2.ex} \\
      \name Daichi Fujiki$^{1}$ \email dfujiki@artic.iir.isct.ac.jp \vspace{2.ex} \\
      \name Susumu Takeuchi$^{2}$ \email susumu.takeuchi@ntt.com \vspace{2.ex} \\
      \name Masato Motomura$^{1}$ \email motomura@artic.iir.isct.ac.jp \\
      \AND
      \addr $^{1}$Department of Information and Communications Engineering, Institute of Science Tokyo, Japan \\
      \addr $^{2}$NTT Corporation, Japan
      }
\begin{document}

\maketitle

\begin{abstract}\label{sec:abstract}
    Randomly initialized dense networks contain subnetworks that achieve high accuracy without weight learning---strong lottery tickets (SLTs).
    Recently, \citet{slt_proof_random_pruning} demonstrated that SLTs could also be found within a randomly pruned source network.
    This phenomenon can be exploited to further compress the small memory size required by SLTs.
    However, their method is limited to SLTs that are even sparser than the source, leading to worse accuracy due to unintentionally high sparsity.
    This paper proposes a method for reducing the SLT memory size without restricting the sparsity of the SLTs that can be found. 
    A random subset of the initial weights is frozen by either permanently pruning them or locking them as a fixed part of the SLT, resulting in a smaller model size.
    Experimental results show that \edgepopup~\citep{edge_popup, gem_miner} 
    finds SLTs with better accuracy-to-model size \tradeoff within frozen networks than within dense or randomly pruned source networks.
    In particular, freezing $70 \%$ of a ResNet on ImageNet provides $3.3 \times$ compression compared to the SLT found within a dense counterpart, raises accuracy by up to $14.12$ points compared to the SLT found within a randomly pruned counterpart, and offers a better accuracy-model size \tradeoff than both.
\end{abstract}
\section{Introduction}\label{sec:introduction}
    The strong lottery ticket hypothesis (SLTH) conjectured the existence of subnetworks within a randomly weighted network---\textit{strong lottery tickets} (SLTs)---that achieve comparable accuracy to trained dense networks~\citep{supermask, edge_popup, slt_proof_2l_width}.
    The existence of such subnetworks that do not require weight training, illustrated in \cref{fig:method_introduction}~(left), has been demonstrated experimentally~\citep{supermask, edge_popup, slt_UGT_gnn, slt_fold_nn,  slt_generative_models} and proven theoretically for dense source networks~\citep{slt_proof_2l_width, slt_proof_log, slt_proof_subset_sum, slt_proof_binary, slt_proof_cnn, slt_proof_act, slt_proof_universal_lottery_tickets, slt_proof_gnn_equivalent_nn, slt_proof_random_pruning}.
    They appear in networks with excessive amounts of parameters, and their existence indicates the possibility of new optimization strategies for deep neural networks based on a network connectivity perspective.
    
    SLTs offer a particularly advantageous opportunity for specialized inference hardware~\citep{hiddenite, hidden_rom, halo_cat} since the random weights can be reconstructed from the seed (\ie, they do not need to be memorized), and the binary mask---\textit{supermask}---can be greatly compressed with entropy coding, vastly reducing off-chip memory access and its associated overheads.
    Furthermore, the binary nature of both the supermask and the random weights can be exploited for multiplier-less execution, as demonstrated practically by~\citet{hiddenite}.
    
    \begin{figure}[t]
        \begin{minipage}[t]{0.585\columnwidth}
            \centering
            \includegraphics[width=\columnwidth]{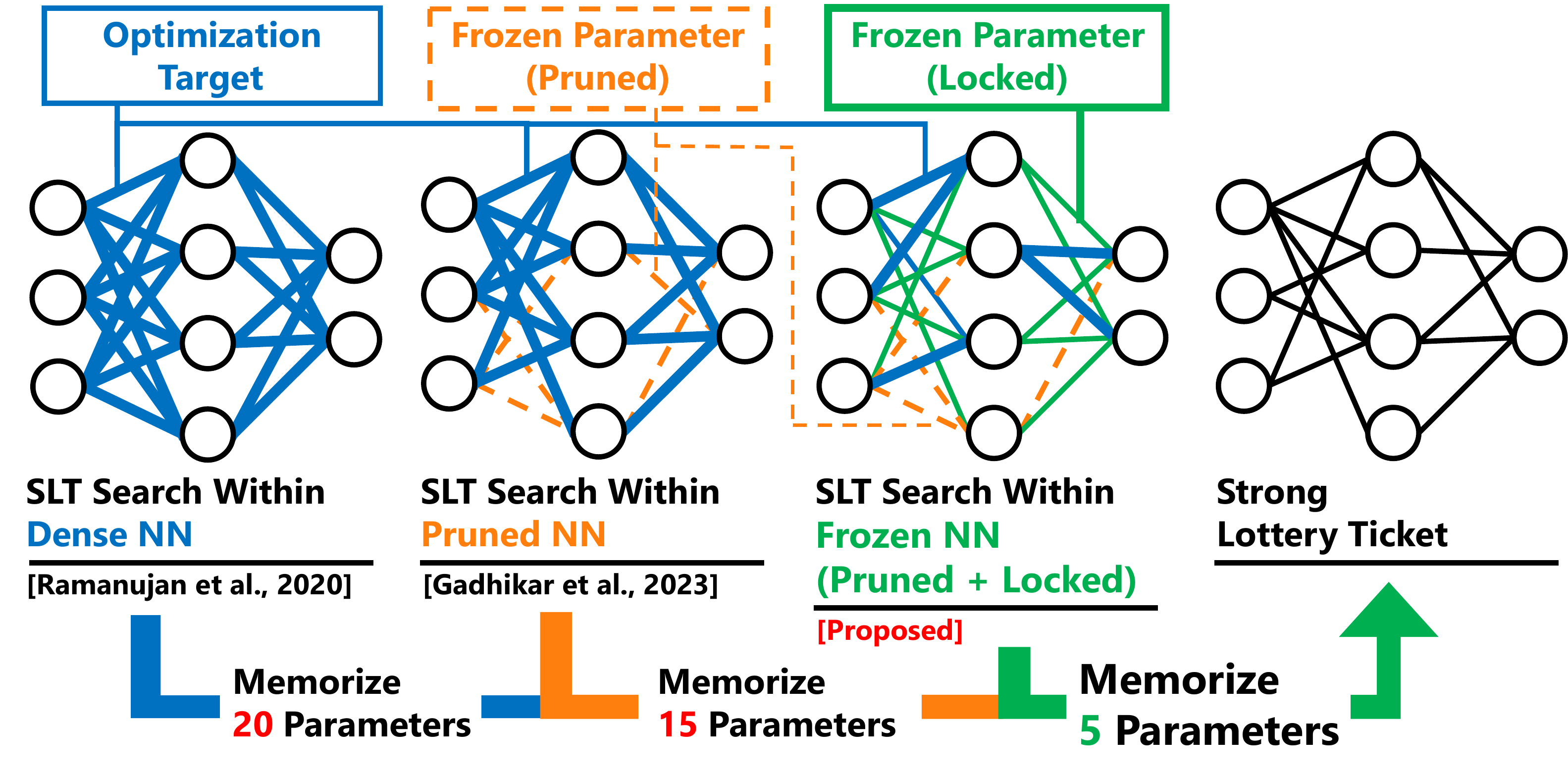}
            \caption{
                Freezing the source network by randomly pruning some parameters and locking others reduces the memorized supermask for finding an SLT.
            }
            \label{fig:method_introduction}
        \end{minipage}
        \hspace{0.01\columnwidth} 
        \begin{minipage}[t]{0.395\columnwidth}
        \centering
        \includegraphics[width=0.85\columnwidth]{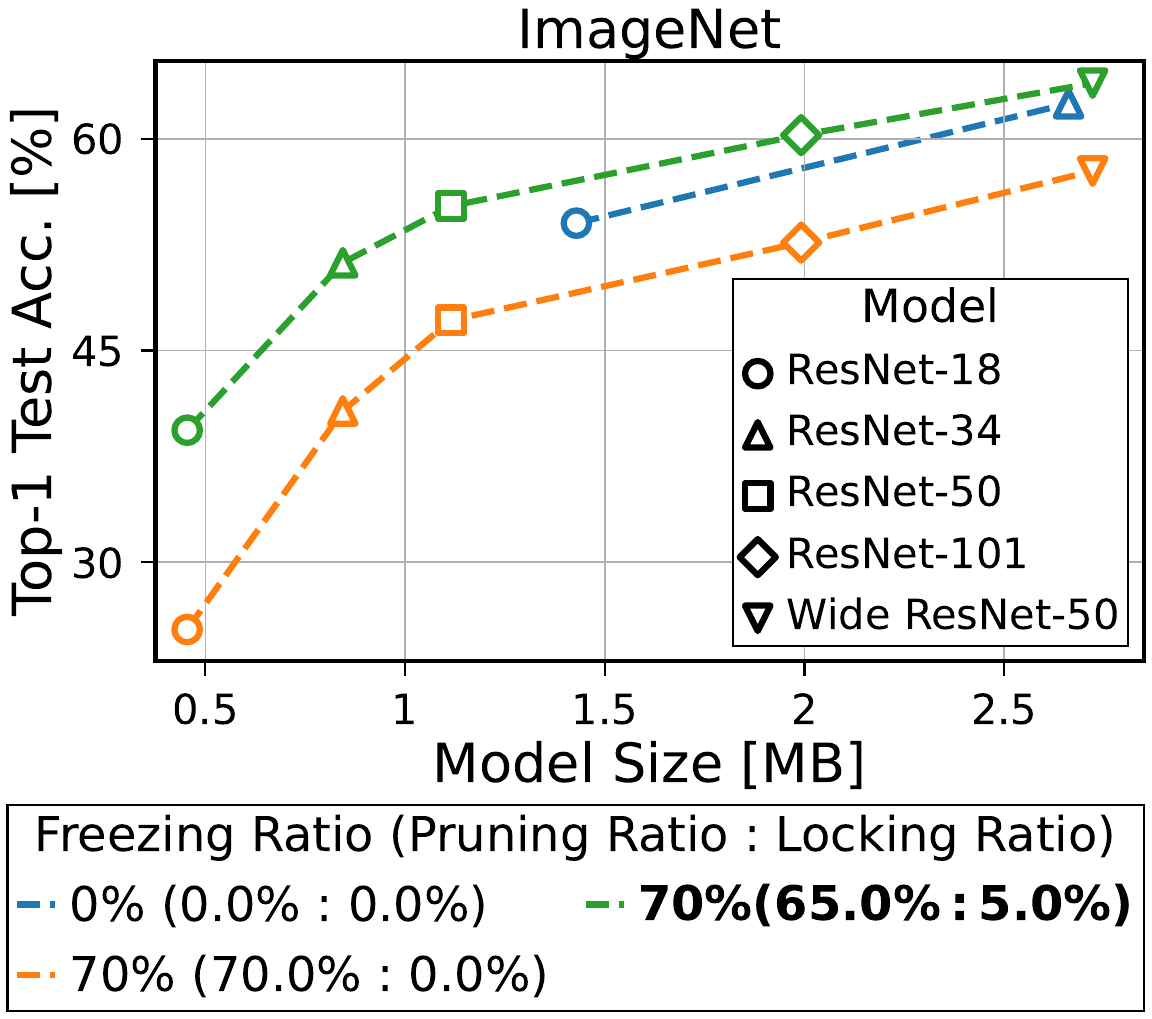}
        \caption{Freezing (\textcolor[HTML]{40A940}{\ensuremath\bullet}) improves the accuracy-to-model size \tradeoff over pre-pruning only (\textcolor[HTML]{FF7F0E}{\ensuremath\bullet}) or non-freezing (\textcolor[HTML]{1F77B4}{\ensuremath\bullet}).}
        \label{fig:model_size_acc_imagenet}
        \end{minipage}
    \end{figure}

    Recently, \citet{slt_proof_random_pruning} showed that accurate SLTs could be found even if the number of edges to be optimized was reduced by randomly pruning the source network at initialization (see \cref{fig:method_introduction}, center).
    Since the random pre-pruning mask can be reconstructed from the seed in the same way as the random weights, and thus there is no need to store the part of the supermask corresponding to pre-pruned weights, their approach can be exploited to further reduce the memory cost required by the SLT in specialized hardware.
    However, when aiming for high compression, their method can only search for SLTs in the relatively high sparsity region,
    and may even lead to layer collapse~\citep{layer_collapse_hayou_2020,synflow}. 
    Furthermore, since the SLT sparsity regions where highly accurate SLTs exist depend on the dataset and network architecture, a search limited to sparse regions may fail to find accurate SLTs.
    For instance, work on SLTs within graph neural networks has shown that dense SLTs are more accurate than sparse ones in some settings~\citep{slt_UGT_gnn, slt_m-sup_folding_gnn}.
    Therefore, it would be desirable to use a method that allows for increased randomness in SLTs for further model compression, but has the freedom to allocate it in the SLT sparsity regions that lead to more accurate tickets.

    This paper introduces such a novel method to reduce the memory cost of the optimized supermask without restricting the desired sparsity of SLTs to be searched for: in addition to random pruning at initialization, it also locks randomly chosen parameters at initialization to be a permanent part of the SLT (\ie, never pruned), as exemplified in~\cref{fig:method_introduction}~(right).
    Both the randomly pruned and the locked parameters---the \textit{frozen} parameters---are left completely random and can be regenerated from seed.
    The weights corresponding to the optimized supermask region (less than $50 \%$ of the total) are also reconstructed from seed, and the supermask is binary and sparse, producing a highly compressible model.
    Far from negatively impacting performance, this cost reduction is efficient: as shown in \cref{fig:model_size_acc_imagenet}, the frozen SLTs achieve a higher accuracy than SLTs with a similar size resulting from conventional methods.
    The contributions of this paper are summarized as follows:
    \begin{itemize}
        \item We propose a novel method that vastly reduces the number of parameters to be memorized for finding an accurate SLT by freezing (pruning and locking) the source random network.
        \item We experimentally validate our method 
        in three scenarios corresponding to low, medium, and high optimal SLT sparsity regions, which reveal that parameter freezing consistently produces smaller yet accurate supermasks. Even with randomly frozen parameters, we find highly accurate SLTs that cannot be found within dense networks for some desired sparsities.
        \item Furthermore, the experimental results show that SLTs found in frozen networks achieve comparable or better accuracy-to-model size \tradeoff than SLTs found within dense (non-freezing) or sparse (non-locking) random networks.
    \end{itemize}
    As mentioned above, SLTs are quite attractive for neural engine design, as they can vastly reduce the memory size for model storage, meaning that off-chip memory access---by far the major bottleneck of energy and time consumption~\citep{computing_energy_problem}---can be drastically reduced for energy-efficient inference acceleration~\citep{hiddenite}.
    Our contributions have the potential to reduce off-chip memory access further and to make inference more energy-efficient than previous designs.
\section{Preliminaries}\label{sec:strong_lottery_tickets}
    This section outlines the background of strong lottery tickets (SLTs) within dense or sparse networks and SLT search algorithms.
    \subsection{Strong Lottery Tickets in Dense Networks}
        SLTs~\citep{supermask, edge_popup, slt_proof_2l_width} are subnetworks within a randomly weighted neural network that achieve high accuracy \textit{without any weight training}.
        Compared with learned dense weight models, SLTs can be reconstructed from a small amount of information:
        since the random weights can be regenerated from their seed, it is only necessary to store the binary supermask and the seed~\citep{hiddenite}.
        SLT search algorithms for deep neural networks~\citep{supermask, edge_popup, probmask, gem_miner} find SLTs by updating weight scores, which are then used to generate the supermask, instead of updating weights.
        For example, the \edgepopup algorithm~\citep{edge_popup, gem_miner} used in this paper finds SLTs by applying a supermask generated from the connections with the top-$k\%$ scores.
        \paragraph{SLT Existence via Subset-Sum Approximation}\label{subsec:slt_existence_ssa}
            Given a set of random variables and a target value, the subset-sum problem consists of finding a subset whose total value approximates the target.
            \citet{subset_sum_lueker} showed that such a subset exists with high probability if the number of random variables is sufficiently large:
            \begin{lemma}[Subset-Sum Approximation~\citep{subset_sum_lueker}]
                \label{lemma:ssa}
                Let $X_1, ..., X_n \sim U(-1, 1)$ be independent, uniformly distributed random variables.
                Then, except with exponentially small probability, any $z\in [-1, 1]$ can be approximated by a subset-sum of $X_i$ if $n$ is sufficiently large.
            \end{lemma}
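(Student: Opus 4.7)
My plan is to follow the classical argument of \citet{subset_sum_lueker}, which shows that the set of subset sums of $X_1,\dots,X_n$ becomes dense in $[-1,1]$ as $n$ grows. Fix a target approximation accuracy $\epsilon > 0$ and introduce the random set of subset sums $S_n = \{\sum_{i \in T} X_i : T \subseteq \{1,\dots,n\}\}$. The goal is to show that, with probability at least $1 - 2^{-\Omega(n)}$, every $z \in [-1,1]$ admits some $s \in S_n$ with $|s - z| \leq \epsilon$.

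The first step is to reduce the uniform-in-$z$ statement to a pointwise one via a discretization. Cover $[-1,1]$ by an $\epsilon/2$-net of $O(1/\epsilon)$ points $\tilde z_1,\dots,\tilde z_N$; any $z$ lies within $\epsilon/2$ of some $\tilde z_j$, so proving $\Pr[\exists s \in S_n : |s-\tilde z_j| \leq \epsilon/2] \geq 1 - 2^{-\Omega(n)}$ for each fixed $j$ and union bounding over the $N$ net points suffices.

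The second and technically central step is the per-target bound. For a fixed $z \in [-1,1]$, define the potential $D_n(z) = \min_{s \in S_n \cap [-1,1]} |s - z|$ and establish a one-step improvement: conditioning on the history, with probability bounded away from zero the inclusion of $X_{n+1}$ yields a new subset sum whose distance to $z$ is a constant fraction of $D_n(z)$ (intuitively, any existing $s \in S_n$ near $z$ has $s$ and $s+X_{n+1}$ straddling $z$ on a favorable event, contracting the gap). Iterating this contraction and tracking the resulting supermartingale-like recursion via a Chernoff-type / moment-generating-function argument (as in Lueker's original potential-function analysis) shows that the failure probability $\Pr[D_n(z) > \epsilon/2]$ decays exponentially in $n$ once $n \gtrsim \log(1/\epsilon)$ rounds of contraction have elapsed. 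Combining this with the union bound over the $O(1/\epsilon)$ net points still leaves a failure probability of $2^{-\Omega(n)}$ as long as $n$ grows at least linearly in $\log(1/\epsilon)$.

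The main obstacle I anticipate is the contraction step: different targets $z$ are coupled through the same randomness $X_1,\dots,X_n$, so a naive union bound on uncontrolled events can be wasteful, and one must be careful that the constant shrinkage factor holds on a large-probability event independent of the conditioning. Lueker addresses this with a carefully chosen potential that is monotone along the recursion and admits a clean Markov-chain analysis; reproducing or invoking that derivation is the delicate technical core, while the $\epsilon$-net reduction and the exponential-probability conclusion are routine once the per-target recursion is in hand.
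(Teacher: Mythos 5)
The paper does not actually prove this lemma: it is imported verbatim from \citet{subset_sum_lueker} (the appendix simply invokes Lueker's Corollary~2.5 as a black box), so the only thing to assess is whether your reconstruction of Lueker's argument would go through. It would not, in the form you describe. The crux of your plan is the per-target contraction step, where you claim that ``with probability bounded away from zero'' the new variable $X_{n+1}$ produces a subset sum whose distance to $z$ is a constant fraction of $D_n(z)$, via $s$ and $s+X_{n+1}$ straddling $z$. For that mechanism the favorable event is $s+X_{n+1}\in[z-D_n(z),\,z+D_n(z)]$ (or similar), whose probability is $\Theta(D_n(z))$, not $\Theta(1)$; it vanishes exactly as the approximation improves. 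The resulting recursion is of the form $\mathbb{E}[D_{n+1}]\le D_n-cD_n^2$, which yields only $D_n=\Theta(1/n)$ and, via Markov, a polynomially small failure probability. That is exponentially weaker than the lemma in both the sample complexity ($n\gtrsim 1/\varepsilon$ instead of $n\gtrsim\log(1/\varepsilon)$) and the tail ($1/(n\varepsilon)$ instead of $2^{-\Omega(n)}$), so the stated conclusion does not follow.

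The missing idea is that one must use \emph{all} $2^n$ existing subset sums at once, not just the nearest one, and the clean way to do this is Lueker's covering-measure potential rather than a per-target distance plus an $\varepsilon$-net. Concretely: let $u_t$ be the Lebesgue measure of the set of $z\in[-1,1]$ that are \emph{not} within $\varepsilon$ of any subset sum of $X_1,\dots,X_t$. An uncovered point $z$ becomes covered at step $t+1$ whenever $z-X_{t+1}$ lies in the already-covered set (within range), an event of probability proportional to the covered measure $2-u_t$ rather than to $D_t(z)$; this gives a conditional contraction $\mathbb{E}[u_{t+1}\mid\mathcal{F}_t]\le \rho\,u_t$ with $\rho<1$ once a constant fraction is covered, and a supermartingale/MGF argument converts the geometric decay of $\mathbb{E}[u_t]$ into the exponential tail, uniformly over $z$ with no union bound needed. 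Your sketch names this potential-function analysis as the ``delicate technical core'' and defers to Lueker for it, but the specific contraction you propose in its place is quantitatively wrong, so as written the proposal has a genuine gap precisely at the step that carries the whole proof.
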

            Based on this \cref{lemma:ssa}, which was first introduced into the SLT context by \citet{slt_proof_subset_sum}, previous works~\citep{slt_proof_act, slt_proof_conv_residual, slt_proof_cnn, slt_proof_subset_sum} showed that an SLT that approximates an arbitrary target network exists in a dense source network if it is logarithmically wider and constantly deeper than the target network.
            In particular, \citet{slt_proof_act} proved that a source network with depth $L\!+\!1$ and larger width than the target network contains an SLT that can approximate the target network with depth $L$.

    \subsection{Strong Lottery Tickets in Sparse Networks}
        Recently, \citet{slt_proof_random_pruning} revealed that SLTs also exist within sparse source networks, \ie, random networks that have been \textit{randomly pruned at initialization} (see \cref{fig:method_introduction}, center).
        They showed its existence experimentally as well as theoretically, as outlined later: 
        an SLT that approximates a given target network exists with high probability in a sparse source network that is sufficiently wider and deeper than the target.
        \paragraph{SLT Existence in Sparse Networks}\label{subsec:slt_existence_rpn}
            To prove the existence of SLTs within sparse networks, \citet{slt_proof_random_pruning} extended the subset-sum approximation (\cref{lemma:ssa}) to the situation where randomly chosen variables are permanently pruned at initialization:
            \begin{lemma}[Subset-Sum Approximation in Sparse Networks~\citep{slt_proof_random_pruning}]
                \label{lemma:ssa_rp}
                Let $X_1, ..., X_n$ be as in \cref{lemma:ssa}, and $M_1, ..., M_n\sim Ber(p)$ be independent, Bernoulli distributed random variables with $p \in (0,1)$. Then, except with exponentially small probability, any $z\in [-1, 1]$ can be approximated by a subset-sum of $M_i X_i$ if $n$ is sufficiently large.
            \end{lemma}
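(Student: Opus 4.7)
The plan is to reduce \cref{lemma:ssa_rp} to \cref{lemma:ssa} by conditioning on the Bernoulli masks $M_1,\ldots,M_n$ and applying the original subset-sum approximation to the surviving variables. The key observation is that whenever $M_i = 0$ we have $M_i X_i = 0$, so any pruned coordinate contributes nothing to a subset-sum of $M_i X_i$; hence approximating $z$ by a subset-sum of $\{M_i X_i\}_{i=1}^n$ is equivalent to approximating $z$ by a subset-sum of the surviving weights $\{X_i : M_i = 1\}$.

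First I would use a standard Chernoff bound to show that the number of surviving indices $N := |\{i : M_i = 1\}|$ concentrates around its mean $pn$: with probability at least $1 - e^{-\Omega(pn)}$, we have $N \geq pn/2$. Next, since the masks $M_i$ are independent of the weights $X_i$, conditioning on the realized mask pattern does not alter the joint distribution of the weights. In particular, conditional on any fixed pattern $(m_1,\ldots,m_n)$ with $\sum_i m_i \geq pn/2$, the family $\{X_i : m_i = 1\}$ is a collection of at least $pn/2$ i.i.d.\ $U(-1,1)$ random variables. Applying \cref{lemma:ssa} to this sub-collection — which is permissible as soon as $n$ is large enough that $pn/2$ exceeds the threshold demanded by that lemma — yields that, except on an event of exponentially small conditional probability, every $z \in [-1,1]$ is approximated by some subset-sum of the surviving weights.

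Finally, a union bound over the mask-concentration failure and the conditional subset-sum failure gives an unconditional failure probability of $e^{-\Omega(pn)}$, which is exponentially small in $n$ for any fixed $p \in (0,1)$. Lifting the subset of surviving indices back to a subset $S \subseteq \{1,\ldots,n\}$ (by including any pruned indices arbitrarily, since they contribute $0$) produces the required subset-sum of the $M_i X_i$, completing the argument.

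The main obstacle I anticipate is not conceptual but rather bookkeeping: the exceptional probability in \cref{lemma:ssa} must hold \emph{uniformly} over all targets $z \in [-1,1]$, and I must make sure this uniform statement passes cleanly through the conditioning on the mask. This is straightforward because mask and weights are independent, so the conditional law of $\{X_i : M_i = 1\}$ is simply that of $N$ i.i.d.\ uniforms; nevertheless, care is needed to state the bound so that the ``sufficiently large $n$'' in the conclusion depends only on $p$ and the desired approximation error, not on the realization of the mask.
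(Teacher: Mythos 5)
Your proposal is correct: conditioning on the Bernoulli mask, noting that pruned coordinates contribute nothing to the subset-sum, concentrating $\sum_i M_i$ around $pn$ via a Chernoff/Hoeffding bound, applying \cref{lemma:ssa} to the $\geq pn/2$ surviving i.i.d.\ uniforms, and union-bounding the two failure events is exactly the standard argument for this result. Note that the paper itself does not prove \cref{lemma:ssa_rp} --- it imports it from \citet{slt_proof_random_pruning} --- but your route coincides with that source and with the mask-counting step the appendix uses for the harder locked and frozen variants (\cref{lemma:ssa_rl_app,lemma:ssa_pf_app}), where the genuinely new difficulty (approximating the random, non-constant target $\sum_i M_i X_i$) does not arise in the pruning-only case.
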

            By applying this extended lemma instead of~\cref{lemma:ssa} to the SLT existence theorem presented by~\citet{slt_proof_act}, \citet{slt_proof_random_pruning} proved the SLT existence in sparse networks as following theorem:
            \begin{theorem}[SLT Existence in Sparse Networks~\citep{slt_proof_random_pruning}]
                \label{theorem:slt_rp}
                Let a target network $f_T$ with depth $L$ and a sparse source network $f_S$ with depth $L\!+\!1$ be given. 
                Assume that the source network is randomly pruned with pruning ratio $p_l$ for each $l$-th layer.
                Also assume that these networks have ReLU activation function and initialized by using a uniform distribution $U[-1, 1]$.
                Then, except with exponentially small probability, a subnetwork $f_{SLT}$ exists in the sparse source network $f_S$ such that $f_{SLT}$ approximates the target network $f_T$ if the width of $f_S$ is sufficiently large.
            \end{theorem}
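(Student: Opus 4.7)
The plan is to adapt the SLT existence argument of \citet{slt_proof_act} for dense source networks by substituting the sparse subset-sum approximation (\cref{lemma:ssa_rp}) for the standard one (\cref{lemma:ssa}) at the single step where individual target weights are approximated by sums of source weights. Since the rest of the construction only depends on having, for each scalar target weight, a collection of random source weights whose signed subset-sum closely matches that target, the dense proof goes through essentially unchanged whenever the approximation property survives random pruning at initialization, which is exactly what \cref{lemma:ssa_rp} provides.

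First I would recall the block construction from the dense proof: the depth-$(L{+}1)$ source $f_S$ simulates each layer of $f_T$ by a pair of consecutive layers, exploiting the identity $x=\mathrm{ReLU}(x)-\mathrm{ReLU}(-x)$ so that one such pair can implement an arbitrary linear map followed by a ReLU. For each scalar target weight $w\in[-1,1]$ one earmarks a dedicated group of $n$ i.i.d.\ $U[-1,1]$ source weights and selects a subset $S$ whose sum $\sum_{i\in S} X_i$ is within $\eps$ of $w$. Setting the supermask to keep precisely those indices and prune the rest reduces the theorem to a per-weight approximation question answered by subset-sum approximation.

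Next, I would invoke \cref{lemma:ssa_rp} in place of \cref{lemma:ssa} for each such per-weight approximation. Over a group of $n$ i.i.d.\ uniform weights of which a fixed fraction has been randomly masked at initialization, the sparse lemma guarantees that except with exponentially small probability in $n$, every target $w\in[-1,1]$ can still be written as $\sum_i M_i X_i$ up to error $\eps$. A union bound over all target weights of $f_T$ (polynomial in $L$ and in the target width) then shows that, if $n$ is chosen polylogarithmic in $1/\eps$ and in the target size, every target weight is simultaneously $\eps$-approximated by an admissible subset of non-pruned source weights, with total failure probability still exponentially small in $n$. Propagating the per-weight error through the $L$ layers of $f_T$ yields a cumulative approximation error that can be driven below any prescribed threshold by choosing $\eps$ small enough, and hence $f_S$ sufficiently wide.

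The hard part is bookkeeping rather than any genuinely new idea: one must verify that \cref{lemma:ssa_rp} can be dropped into the dense proof without disturbing the ReLU-doubling trick or the per-block independence needed for the union bound, and that the width overhead caused by pruning—each block must be enlarged by roughly a factor compensating for the pruned fraction, since a $p_l$-fraction of the i.i.d.\ weights in layer $l$ is unavailable—remains polylogarithmic in $1/\eps$. A subtle point is that the pruning ratios $p_l$ vary across layers of $f_S$, so the required width of the $l$-th block depends on $p_l$; one must also check that for every $p_l\in(0,1)$ the exponential decay in \cref{lemma:ssa_rp} strictly dominates the polynomial growth from the union bound over target weights, which is what permits the ``sufficiently wide'' conclusion to hold uniformly in the target architecture.
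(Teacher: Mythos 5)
Your proposal matches the paper's route exactly: the paper (following \citet{slt_proof_random_pruning}) obtains this theorem by substituting the sparse subset-sum approximation (\cref{lemma:ssa_rp}) for the standard one (\cref{lemma:ssa}) inside the dense-network SLT existence construction of \citet{slt_proof_act}, with the same union bound over target weights and layer-wise error propagation you describe. Your bookkeeping remarks about the width overhead depending on $p_l$ are consistent with the cited result, so there is nothing to add.
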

            Thus, by extending the subset-sum approximation lemma, the conventional SLT existence proof can be easily extended to various network settings.
\section{Strong Lottery Tickets in Frozen Networks}\label{sec:methods}
    The approach of finding SLTs within sparse networks by \citet{slt_proof_random_pruning} offers a practical advantage: as the randomly pre-pruned parts can be reconstructed from the seed, this pre-pruning process can be exploited to reduce the memory requirement of the supermask.
    
    However, this approach also imposes the limitation that it can only search for SLTs with a sparsity higher than the pre-pruning ratio. 
    For example, a random pre-pruning ratio of $90\%$ vastly reduces the memory size of the supermask, but it limits the search to only SLTs sparser than $90\%$.
    Therefore, their method is incompatible with exploring the whole SLT sparsity range for optimal accuracy.

    This paper proposes a novel method that allows us to find SLTs within the optimal sparsity range while still capitalizing on the compression gains offered by random connectivity initialization.
    We explore the performance of SLTs within a \textit{frozen source network}, \ie, a random network that (in addition to randomly pruned parameters) has randomly chosen parameters forced to be permanent part (\ie, never pruned) of the SLT---\textit{locked parameters} (see \cref{fig:method_introduction}, right).
    Since the random locking pattern can be reconstructed from seed in the same way as the random pre-pruning pattern, our freezing method allows us to compress SLTs further.
    Additionally, locking allows us to extend the benefits of pre-pruning to the scenarios where the SLT sparsity should not be too high, \eg, as found in graph neural networks~\citep{slt_UGT_gnn, slt_m-sup_folding_gnn}.

    This section first describes the construction of frozen networks and the freezing pattern encoding for model compression.
    Then, we perform a preliminary experiment of the optimal settings for the proposed method in preparation for the evaluation experiments in \cref{sec:experiments}.

    \subsection{Partial Freezing for Enhanced SLT Compression}\label{subsec:proposed_method}
        \begin{wrapfigure}{R}{.5\columnwidth}
            \centering
            \vspace{-1.ex}
            \includegraphics[width=.5\columnwidth]{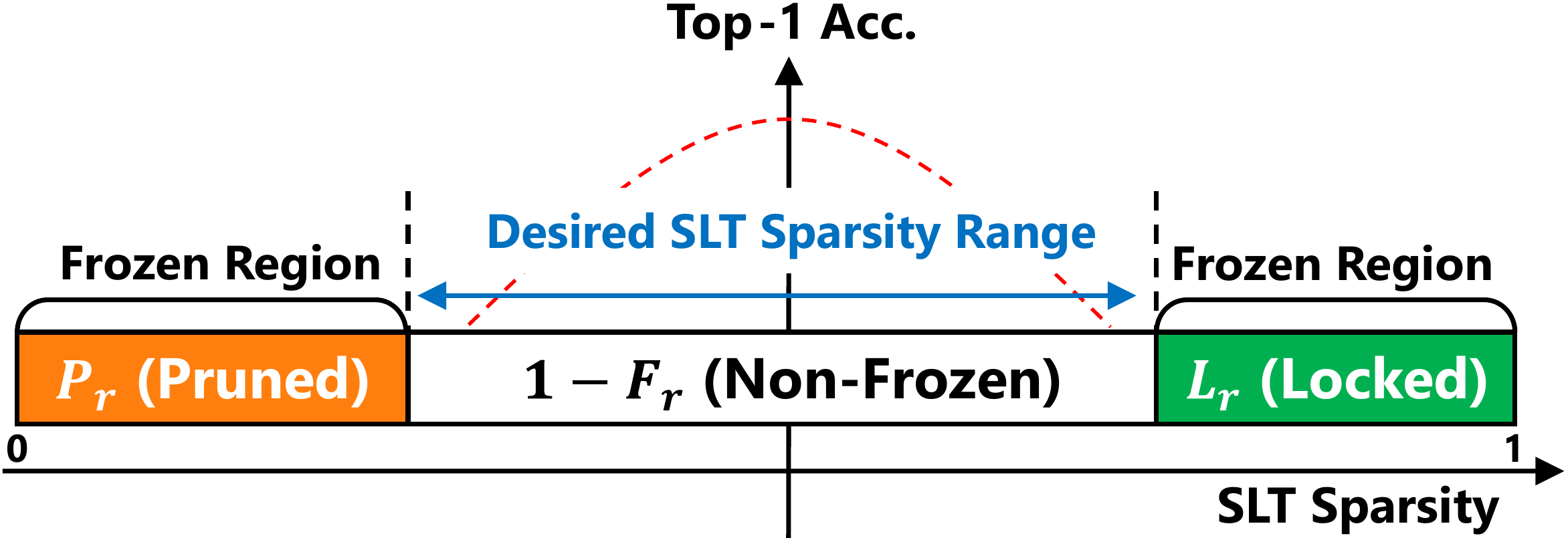}
            \caption{
                Pre-pruning and locking set the bounds of the SLT sparsity that can be found. These optimal bounds are investigated in~\cref{subsec:investigating_the_ratio_of_pruning_and_fixing_ratio}.
            }
            \label{fig:exp_illustration}
            \vspace{-5.ex}
        \end{wrapfigure}
        In our method, since frozen regions in the source network are completely random and thus can be reconstructed from seed, we need to memorize only the optimized supermask region.
        The total amount of frozen parameters is determined by a global \textit{freezing ratio} $F_r$, which is the sum of the respective \textit{pre-pruning ratio} $P_r$ and \textit{locking ratio} $L_r$.
        Therefore, we optimize the $1\!-\!F_r$ of the parameters (\ie, non-frozen parameters) of the original dense network. 
        As explained previously, it is not possible to search for an SLT with a lower sparsity (denser) than the frozen source network. 
        Consequently, as shown in \cref{fig:exp_illustration}, the pre-pruning ratio $P_r$ sets the lower bound of the sparsity of the SLTs that can be found.
        On the other hand, it is not possible to search for an SLT that prunes more parameters than those available, so the locking ratio $L_r$ sets the upper bound of the sparsity of the SLTs that can be found. (Specifically, the upper bound is given by $1\!-\!L_r$.) 
        Thus, these ratios allow us to freely control the proportion of pre-pruning and locking ratios and memory size reduction ratio for the desired SLT sparsity.
        \paragraph{Frozen network construction:}
            We assume that the $l$-th layer of network with depth $L$ has weights $\bm{w}^{(l)}\in \mathbb{R}^{n^{(l)}}$.
            The number of frozen parameters is determined by the layer-wise \textit{freezing ratios} $p_f^{(1)}, ...,p_f^{(L)}$, where each $p_f^{(l)}$ is defined as the sum of the layer-wise \textit{pre-pruning ratio} $p_p^{(l)}$ and the layer-wise \textit{locking ratio} $p_l^{(l)}$, and $p_f^{(l)},p_p^{(l)},p_l^{(l)}\in[0,1]$.

            To prune $p_f^{(l)} n^{(l)}$ parameters and lock $p_l^{(l)} n^{(l)}$ parameters, we generate two random masks, a pre-pruning mask ${\bm{m}^{(l)}_p\in\{0,1\}^{n^{(l)}}}$ and a locking mask ${\bm{m}^{(l)}_l\in\{0,1\}^{n^{(l)}}}$, so that they satisfy ${||1-\bm{m}^{(l)}_p||=p_p^{(l)} n^{(l)}}$, ${||\bm{m}^{(l)}_l||=p_l^{(l)} n^{(l)}}$, and ${(1-\bm{m}^{(l)}_p) \cdot \bm{m}^{(l)}_l=0}$ (\ie, we require pre-pruning and locking to be implemented without overlap).
            The layer-wise weights frozen with these masks are calculated as 
            ${\bm{w}^{(l)}_{f}:=(\bm{m}_p^{(l)}\odot(1-\bm{m}_l^{(l)}) + \bm{m}_l^{(l)})\odot\bm{w}^{(l)}}$.
            These masks are fixed during training, and we search for SLTs only in the parts where ${\bm{m}_p^{(l)}\odot(1-\bm{m}_l^{(l)})}$ is one.

        \paragraph{Setting the layer-wise ratios:}
            Our method considers the following two existing strategies for determining the layer-wise pre-pruning ratio from the desired global pre-pruning ratio of the network:
            \begin{itemize}
            \item \textit{Erd\H{o}s-R\'{e}nyi-Kernel} (ERK):
                The pre-pruning ratio of layer $l$ is proportional to the scale 
                ${(C_{in}^{(l)} + C_{out}^{(l)} + k_{h}^{(l)} + k_{w}^{(l)})/(C_{in}^{(l)}\cdot C_{out}^{(l)}\cdot k_{h}^{(l)}\cdot k_{w}^{(l)})}$, where $C_{in}^{(l)}$, $C_{out}^{(l)}$, $k_{h}^{(l)}$, and $k_{w}^{(l)}$ denote input channels, output channels, kernel height, and kernel width of the layer $l$, respectively~\citep{rigl}.
            \item \textit{Edge Per Layer} (EPL): 
                Each layer's pre-pruning ratio is set so that they all have the same number of remaining weights~\citep{DCTpS,slt_proof_random_pruning}.
            \end{itemize}
            The same strategy is employed to determine the layer-wise pre-pruning and freezing ratios from their respective global ratios, and then the locking ratios are calculated as the difference between the corresponding freezing and pre-pruning ratios. 

        \paragraph{Freezing pattern encoding for model compression:}
            The freezing pattern can be encoded during inference as a ternary mask---a \emph{freezing mask}---that indicates whether a parameter is pruned, locked, or part of the supermask.
            For example, encoding pre-pruning as $-1$, locking as $+1$, and supermask inclusion as $0$, the layer-wise freezing mask can be encoded as $\bm{m}^{(l)}_l + (\bm{m}^{(l)}_p-1) \in \{-1,0,1\}^{n^{(l)}}$.
            Since this freezing mask is also random and fixed, it can be regenerated from its seed and ratios, similarly to the random weights.
            Furthermore, the supermask size is reduced by excluding from it the frozen parameters, so the SLTs found by this method can be compressed in inference to an even smaller memory size than those produced by the existing SLT literature~\citep{hiddenite,m_sup,slt_fold_nn}, reducing costly off-chip memory access on specialized neural inference accelerators~\citep{hiddenite,halo_cat}, and thus offering an opportunity to perform even faster and more energy-efficient inference processing.
    \subsection{Optimal Pruning:Locking Proportion for Freezing}\label{subsec:investigating_the_ratio_of_pruning_and_fixing_ratio}
        \begin{wrapfigure}{R}{.42\columnwidth}
            \centering
            \vspace{-2.7ex}
            \includegraphics[width=.42\columnwidth]{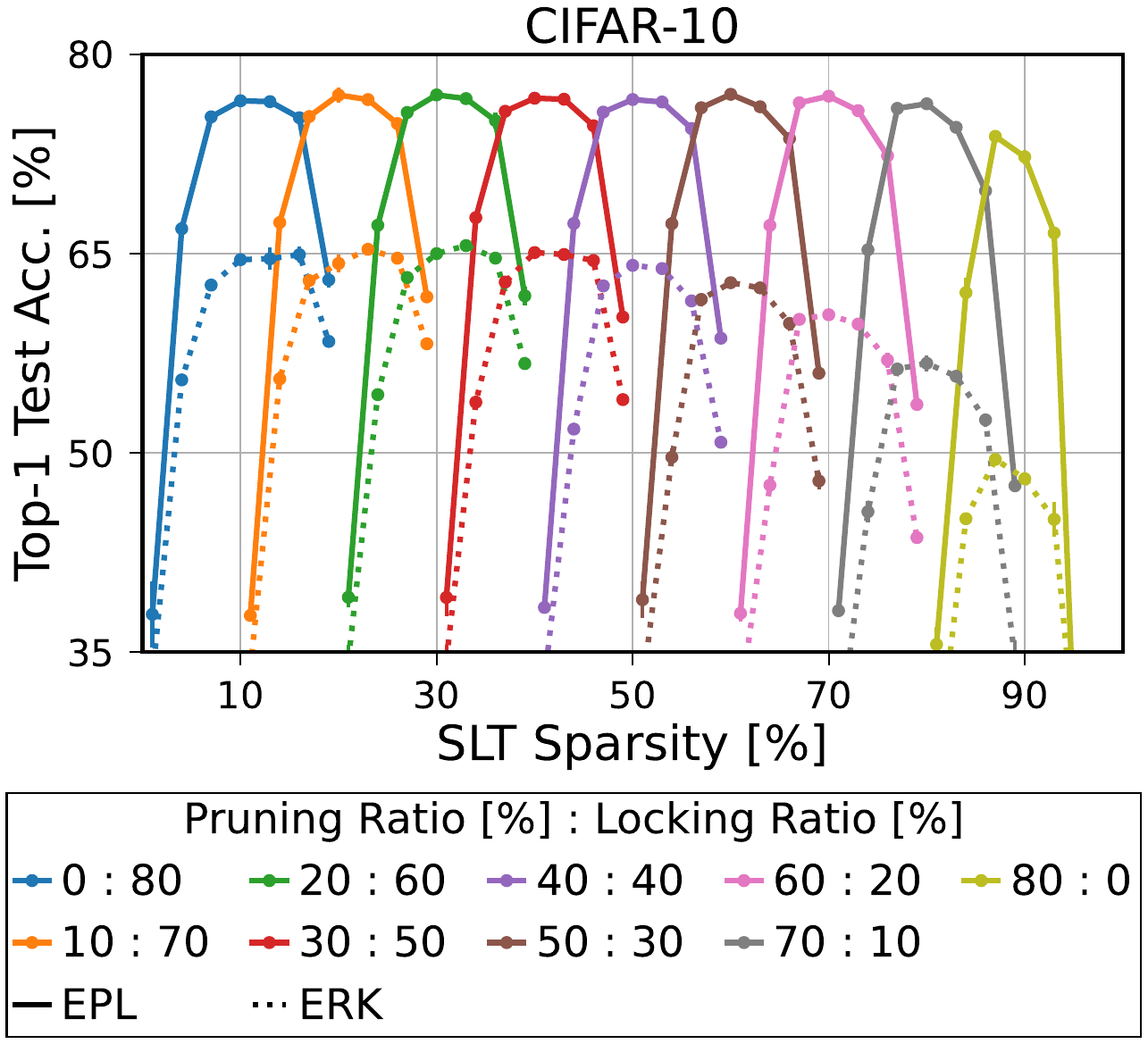}
            \caption{
                Different prune:lock proportions of a $80\%$ freezing ratio using a Conv6.
            }
            \label{fig:pruning_ratio_fixing_ratio}
            \vspace{-7.ex}
        \end{wrapfigure}
        Here, we perform a preliminary investigation of the optimal pruning:locking proportion for each given desired SLT sparsity by varying the proportion with a fixed freezing ratio of the network.
        \cref{fig:pruning_ratio_fixing_ratio} explores different configurations of a $80\%$ freezing ratio---the situation where the supermask memory size is $20\%$ of the SLT in the dense source network---on a Conv6 network and compares the performance of the found SLTs on \cifar{10}.
        As expected, the best-performing SLT of each prune:lock configuration is found at the center of the non-frozen region, where the number of candidate subnetworks is maximized.
        In other words, when the non-frozen region accounts for $S\in[0, 1]$ of the entire source network, the optimal SLT sparsity is $k=P_r+S/2$, where $P_r=\sum_l p^{(l)}_pn^{(l)}/\sum_k n^{(k)}$ is the global pre-pruning ratio of the network.
        Conversely, for a given freezing ratio $F_r$ of the network and a desired SLT sparsity $k$, the optimal position of the frozen region is set by the pre-pruning ratio $P_r=k-(1-F_r)/2$ and the corresponding locking ratio $L_r=F_r-P_r$.
        In the cases where this would result in $P_r\!<\!0$ or $L_r\!<\!0$, we choose a best-effort approach that keeps the desired freezing ratio and sets the bounds to $P_r\!=\!0$ or $L_r\!=\!0$, respectively.

        Additionally, \cref{fig:pruning_ratio_fixing_ratio} compares the two strategies for setting ratios considered in \cref{subsec:proposed_method}, showing that EPL outperforms ERK in all cases.
        Consequently, hereafter, the proposed method 
        sets the global ratios in order to position the frozen region center as close as possible to the desired SLT sparsity, and then sets the layer-wise ratios using EPL.

    \subsection{SLT Existence in Frozen Networks}\label{subsec:slt_existence_rrn}
        One question comes to mind here:
        \textit{does SLT exist that approximates a given target network, even if the parameters are randomly frozen (\ie, pruned or locked) in advance?}
        It has been shown by~\citet{slt_proof_random_pruning} that an SLT that approximates a given target network exists in the pre-pruned source network if the source network is sufficiently wider and deeper than the target, but it is not known whether such an SLT exists in the situation of a frozen source network.
        Here, we provide a rough theoretical result indicating that an SLT capable of approximating a target network exists in a frozen network with a sufficiently large width. 
        This result is proved by extending the subset-sum approximation lemma (\cref{lemma:ssa}) to the case where some parameters are locked (for detailed proof, see \cref{sec:proof_app}).
        \begin{lemma}[Subset-Sum Approximation in Randomly Locked Networks]\label{lemma:ssa_rl}
        \interlinepenalty10000
            Let $X_1, ..., X_n$ be as in \cref{lemma:ssa}, and $M'_1, ..., M'_n\sim Ber(q)$ be independent, Bernoulli distributed random variables with $q \in (0,1)$.
            Then, except with exponentially small probability, any $z\in [-1, 1]$ can be approximated by the sum of $\sum_{i=1}^{n} M'_iX_i$ and a subset-sum of $(1 - M'_i)X_i$ if $n$ is sufficiently large.
        \end{lemma}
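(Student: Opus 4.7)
The plan is to reduce the statement to an application of the classical subset-sum approximation (\cref{lemma:ssa}) on the free (non-locked) variables, treating the locked part as a random but fixed offset. Conditioning on the mask, partition $[n]$ into $L := \{i : M'_i = 1\}$ (locked) and $F := \{i : M'_i = 0\}$ (free), and note that for any $S \subseteq F$,
\[
    \sum_{i=1}^n M'_i X_i \;+\; \sum_{i \in S} (1 - M'_i) X_i \;=\; Y + T_S,
\]
where $Y := \sum_{i \in L} X_i$ is a fixed random offset and $T_S := \sum_{i \in S} X_i$ ranges over subset sums of the free variables. Approximating $z$ by the left-hand side is thus equivalent to approximating the shifted target $z - Y$ by some $T_S$.

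The argument then rests on three events, each holding except with exponentially small probability in $n$. First, a Chernoff bound on the Bernoulli masks gives $|F| \geq (1-q)n/2$ and $|L| \leq (1+q)n/2$. Second, Hoeffding's inequality applied to $Y$---a sum of $|L|$ independent, mean-zero, $[-1,1]$-bounded variables---yields $|Y| \leq \alpha n$ for any prescribed constant $\alpha > 0$, with failure probability at most $2\exp(-\alpha^2 n / 2)$. Third, a strengthened form of \cref{lemma:ssa}, which is essentially what is actually proved in \citet{subset_sum_lueker}, guarantees that the subset sums of the iid $U(-1,1)$ collection $\{X_i\}_{i \in F}$ are $\varepsilon$-dense in an enlarged interval $[-\beta |F|,\,\beta |F|]$ for some absolute constant $\beta \in (0, 1/2)$ and some exponentially small $\varepsilon$.

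Choosing $\alpha$ small enough (depending on $q$ and $\beta$) so that $1 + \alpha n \leq \beta (1-q) n / 2 \leq \beta |F|$, on the intersection of these three events one has $z - Y \in [-\beta |F|,\,\beta |F|]$ for every $z \in [-1, 1]$. The strengthened density property then produces a subset $S \subseteq F$ with $|T_S - (z - Y)| \leq \varepsilon$, whence $|Y + T_S - z| \leq \varepsilon$ as required. A union bound over the three failure events delivers the overall exponentially small probability claimed by the lemma.

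The main obstacle is ensuring the strengthened density statement in the third step gives exponential decay over a constant-fraction range $[-\beta n, \beta n]$ rather than merely the unit range $[-1, 1]$ appearing in \cref{lemma:ssa}. This is exactly the sharper content of \citet{subset_sum_lueker}---the expected discrepancy of subset sums in their full support $[-\Theta(n),\Theta(n)]$ decays exponentially in $n$---so no essentially new work is required beyond invoking it. The remainder is a routine combination of standard concentration bounds with the union bound, closely paralleling the treatment of \cref{lemma:ssa_rp} by \citet{slt_proof_random_pruning}.
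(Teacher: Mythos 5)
Your proposal is correct and follows essentially the same route as the paper's proof: concentration of the Bernoulli mask counts, Hoeffding's inequality to bound the locked offset $\sum_i M'_i X_i$ by $O(\alpha n)$, and Lueker's extended-range subset-sum density over an interval of width $\Theta(n)$ to absorb that offset, all tied together by a union bound. The only (cosmetic) difference is that you approximate the single shifted target $z - Y$ using the full free pool, whereas the paper splits the free variables into two disjoint sub-pools, approximating $z$ with one via the basic $[-1,1]$ lemma and $\sum_i M'_i X_i$ with the other via the extended-range corollary; your version is marginally cleaner but rests on identical ingredients.
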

        Then, 
        \add{by using the same proof procedure as \cref{lemma:ssa_rl},}
        we extend the subset-sum approximation to the situation where some random variables are frozen (\ie, pruned or locked).
        \begin{lemma}[Subset-Sum Approximation in Frozen Networks]\label{lemma:ssa_pf}
            Let $X_1, ..., X_n$ be as in \cref{lemma:ssa}, $M_1, ..., M_n\sim Ber(p)$ be as in \cref{lemma:ssa_rp}, and $M'_1, ..., M'_n\sim Ber(q)$ be as in \cref{lemma:ssa_rl}.
            Then, except with exponentially small probability, any $z\in [-1, 1]$ can be approximated by the sum of $\sum_{i=1}^{n} M_iM'_iX_i$ and a subset-sum of $M_i(1 - M'_i)X_i$ if $n$ is sufficiently large.
        \end{lemma}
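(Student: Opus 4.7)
The plan is to extend the proof procedure of \cref{lemma:ssa_rl} by adding an outer conditioning on the pruning mask, reducing the freezing case to the locking-only case on the surviving variables. First I would condition on $(M_i)_{i=1}^n$: since $(M_i)$ and $(M'_i)$ are independent, the locking indicators restricted to the kept set $K = \{i : M_i = 1\}$ remain i.i.d.\ $\mathrm{Ber}(q)$, and a standard Chernoff bound gives $|K| \ge np/2$ except with probability exponentially small in $np$. On this event the problem becomes exactly \cref{lemma:ssa_rl} applied to the $|K|$ surviving uniform variables, with the identifications $\sum_i M_i M'_i X_i = \sum_{i \in K} M'_i X_i$ and $M_i(1-M'_i) X_i = (1-M'_i) X_i$ for $i \in K$ and $0$ otherwise.

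Mirroring the procedure of \cref{lemma:ssa_rl}, I would then decompose $z = Y + r$, where $Y = \sum_i M_i M'_i X_i$ is the mandatory locked drift and $r = z - Y$ is the residue to be matched by a subset-sum of the free variables $\{X_i : M_i = 1,\, M'_i = 0\}$. Chernoff bounds on the pair $(M_i, M'_i)$ give $\#\{i : M_i M'_i = 1\} = O(npq)$ and $\#\{i : M_i(1-M'_i) = 1\} = \Theta(np(1-q))$ with high probability, and conditional on the number of locked-and-kept indices, Hoeffding's inequality bounds $|Y| \le C\sqrt{npq\,\log(1/\delta)}$ with probability $1 - \delta$. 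A wide-range subset-sum approximation over the $\Theta(np(1-q))$ free variables is then invoked to $\epsilon$-cover the target interval $\{z - Y : z \in [-1, 1]\}$, which has length $2$ but may sit at a shift of order $\sqrt{n}$. A union bound over all concentration events yields the claim with exponentially small exceptional probability.

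The main obstacle is this wide-range subset-sum approximation: \cref{lemma:ssa} only guarantees $\epsilon$-density on $[-1, 1]$, whereas the shifted target interval is centered at $-Y$ with $|Y|$ possibly of order $\sqrt{n}$. The remedy I would adopt, following the template of \cref{lemma:ssa_rl}, is to partition the free variables into a coarse-stepping batch (logarithmic in the radius $|Y|$) that brings a partial subset-sum within a constant of $-Y$, and a fine-tuning batch of linear size to which \cref{lemma:ssa} is applied on $[-1, 1]$. Sizing the two batches so that both the coarse coverage and the fine approximation hold simultaneously, and verifying that the combined error stays exponentially small under the single union bound with the mask-concentration events from the first two steps, is the most delicate part of the argument.
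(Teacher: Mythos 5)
Your overall architecture matches the paper's: concentrate the mask counts, bound the mandatory drift $Y=\sum_i M_iM'_iX_i$ by Hoeffding conditionally on the masks, and split the free variables $\{X_i : M_i=1,\,M'_i=0\}$ into one group that cancels $Y$ and another to which the basic approximation on $[-1,1]$ is applied. You also correctly identify the crux---the paper itself remarks that one cannot simply feed the stochastic target $z'=z-\sum_i M_iM'_iX_i$ into \cref{lemma:ssa} or the pruning corollary, because $z'$ is random and its range is not a constant interval.

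However, your proposed remedy for that crux fails quantitatively. A ``coarse-stepping batch'' of size logarithmic in the radius $|Y|$ cannot bring a partial subset-sum within a constant of $-Y$: every $X_i$ lies in $[-1,1]$, so any subset-sum of $k$ of them lies in $[-k,k]$, and reaching a target of magnitude $R$ therefore requires $k\ge R-O(1)$ variables, not $O(\log R)$. Here $R$ is of order $\sqrt{n\log(1/\delta)}$ for fixed $\delta$, and of order $n$ if you insist on an exponentially small exceptional probability as the lemma statement requires; there is no scale separation among i.i.d.\ $U(-1,1)$ variables to exploit for logarithmic stepping. The tool that closes this gap---and the one the paper uses---is Lueker's Corollary~3.1, which asserts that subset-sums of $n_2$ i.i.d.\ uniform variables are $\varepsilon$-dense on the \emph{linearly wide} interval $[-\gamma n_2,\gamma n_2]$ for any $\gamma\in(0,\tfrac14)$, except with probability exponentially small in $n_2$. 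Concretely, the paper takes the drift-canceling group to be a constant fraction of the free variables ($n_1=n_2=(1-\beta)n/2$ after the mask-count concentration), bounds $|Y|\le\alpha m\le\alpha\beta n$ with a constant $\alpha$ chosen so that $\alpha\beta n\le\gamma n_2$, and then applies the wide-range density to the (mask-conditional) target $-Y$, which is legitimate because the free $X_i$ are independent of the locked ones. If you replace your logarithmic coarse batch by this linear-size group and invoke the wide-range corollary, the rest of your outline (conditioning on the pruning mask, Chernoff on the counts, the final union bound) goes through as written.
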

        
        Finally, by applying \cref{lemma:ssa_pf} to the Theorem~2.5 in \citet{slt_proof_random_pruning} instead of \cref{lemma:ssa_rp}, it follows that an SLT approximating a target network exists within a frozen network.
        \begin{theorem}[SLT Existence in Frozen Networks]\label{theorem:slt_pf}
            Let a target network $f_\textrm{T}$ with depth $L$ and a partially frozen source network $f_\textrm{S}$ with depth $L\!+\!1$ be given. 
            Assume that the source network is randomly frozen with pruning ratio $p_l$ and locking ratio $q_l$ for each $l$-th layer, except for $0$-th and $1$-st layers.
            Also assume that these networks use the ReLU activation function and are initialized with a uniform distribution $U[-1, 1]$.
            Then, except with exponentially small probability, a subnetwork $f_\textrm{SLT}$ exists in the frozen source network $f_\textrm{S}$ such that $f_\textrm{SLT}$ approximates the target network $f_\textrm{T}$ if the width of $f_\textrm{S}$ is sufficiently large.
        \end{theorem}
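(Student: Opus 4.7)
The plan is to mirror the proof strategy that~\citet{slt_proof_random_pruning} used for Theorem~\ref{theorem:slt_rp}, replacing the sparse subset-sum approximation (\cref{lemma:ssa_rp}) with the frozen subset-sum approximation (\cref{lemma:ssa_pf}) wherever it appears. The high-level structure of the existing proof is: (i) decompose the target network $f_\textrm{T}$ layer by layer; (ii) for each weight $w^{(\ell)}_{ij}$ in a layer of $f_\textrm{T}$, construct a block of parameters in the corresponding layer of $f_\textrm{S}$ and approximate $w^{(\ell)}_{ij}$ by selecting an appropriate subset of those parameters; (iii) propagate the per-weight approximation error through the ReLU activations using the Lipschitz property to obtain a bound on the total functional error; (iv) use a union bound over all weights and layers to show that, with sufficiently large width, the failure probability is exponentially small.

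First I would recall the gadget construction of~\citet{slt_proof_act}: each target weight is realized by a pair of source edges routed through a hidden neuron whose ReLU behavior encodes the sign, so that approximating $w^{(\ell)}_{ij}$ reduces to approximating a real target value in $[-1,1]$ by a subset sum of i.i.d.\ $U[-1,1]$ variables. Under freezing, each such variable $X_i$ is independently multiplied by $M_i\in\{0,1\}$ (pre-pruning, present with probability $p$) and may additionally be forced to be selected via $M'_i\in\{0,1\}$ (locking, with probability $q$ among the surviving ones). The approximation therefore becomes ``fixed part $\sum_i M_i M'_i X_i$ plus subset sum of the $M_i(1-M'_i) X_i$'', which is exactly the quantity controlled by \cref{lemma:ssa_pf}. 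The fixed (locked) part cannot be modulated by the optimizer, but \cref{lemma:ssa_pf} guarantees that together with a chosen subset of the free variables, any target in $[-1,1]$ is still reachable up to the same exponentially small failure probability.

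Next I would instantiate the widths: since \cref{lemma:ssa_pf} requires a number $n$ of available variables that grows like a polylog of the inverse approximation error $\varepsilon$, the width of each layer of $f_\textrm{S}$ must be inflated by the factor $1/((1-p_\ell)(1-q_\ell))$, so that enough surviving, non-locked edges remain per gadget. The remaining accounting---union bound over all target weights, error amplification through the $L$ ReLU layers, and choice of per-edge accuracy $\varepsilon' = \varepsilon / \mathrm{poly}(L, \text{width})$---is then formally identical to~\citet{slt_proof_random_pruning}, yielding the desired $f_\textrm{SLT}$ inside $f_\textrm{S}$ with exponentially high probability. The exclusion of the $0$-th and $1$-st layers from the freezing assumption, inherited from the source theorem, is what allows the input and first hidden layer to act as a clean interface to the gadget construction.

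The main obstacle I would expect is controlling the locked contribution $\sum_i M_i M'_i X_i$: unlike in the purely sparse case, this quantity is a non-zero random constant that the optimizer cannot cancel, and it must be shown to concentrate tightly enough (e.g., by a Hoeffding-type bound on $O(n q)$ bounded summands) that the residual target $z - \sum_i M_i M'_i X_i$ stays in the feasible interval where the remaining free subset sum can still approximate it. This is precisely the technical content subsumed by \cref{lemma:ssa_pf}, and once that lemma is invoked, the rest of the argument is a routine adaptation of the sparse-case proof; I would therefore defer the detailed probability bookkeeping to the appendix and keep the main text at the level of indicating how each ingredient of \citet{slt_proof_random_pruning}'s proof is replaced by its frozen analogue.
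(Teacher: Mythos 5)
Your proposal is correct and follows essentially the same route as the paper: the paper's proof of this theorem is precisely the one-line substitution of the frozen subset-sum approximation (\cref{lemma:ssa_pf}) for the sparse subset-sum lemma inside the proof of Theorem~2.5 of \citet{slt_proof_random_pruning}. Your identification of the key obstacle---that the locked contribution $\sum_i M_i M'_i X_i$ is a random, non-cancellable quantity whose concentration must be handled inside the lemma rather than by naively shifting the deterministic target---matches exactly the point the paper emphasizes in its appendix discussion of why the existing corollaries cannot be applied directly.
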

        Note that this theorem is intended to provide an existential guarantee for the question raised at the beginning of this subsection, whether the freezing operation hurts the existence of SLTs that can approximate an arbitrary network; thus, we do not focus on the sharpness of the probability bound to keep the results simple. Nevertheless, proving sharper bounds may lead to capturing the effect of the freezing operation, which is left for future work.
\section{Experiments}\label{sec:experiments}
    \begin{figure*}[tb]
        \centering
        \subfloat{\includegraphics[width=0.44\columnwidth]{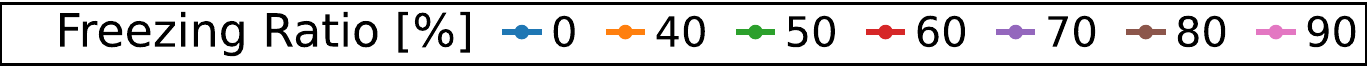}}\vfill
        \vspace{0.1cm} 
        \setcounter{subfigure}{0}
        \subfloat[Conv6 (\cifar{10})\label{fig:sparsity_acc_conv6}]
            {\includegraphics[width=0.333\columnwidth]{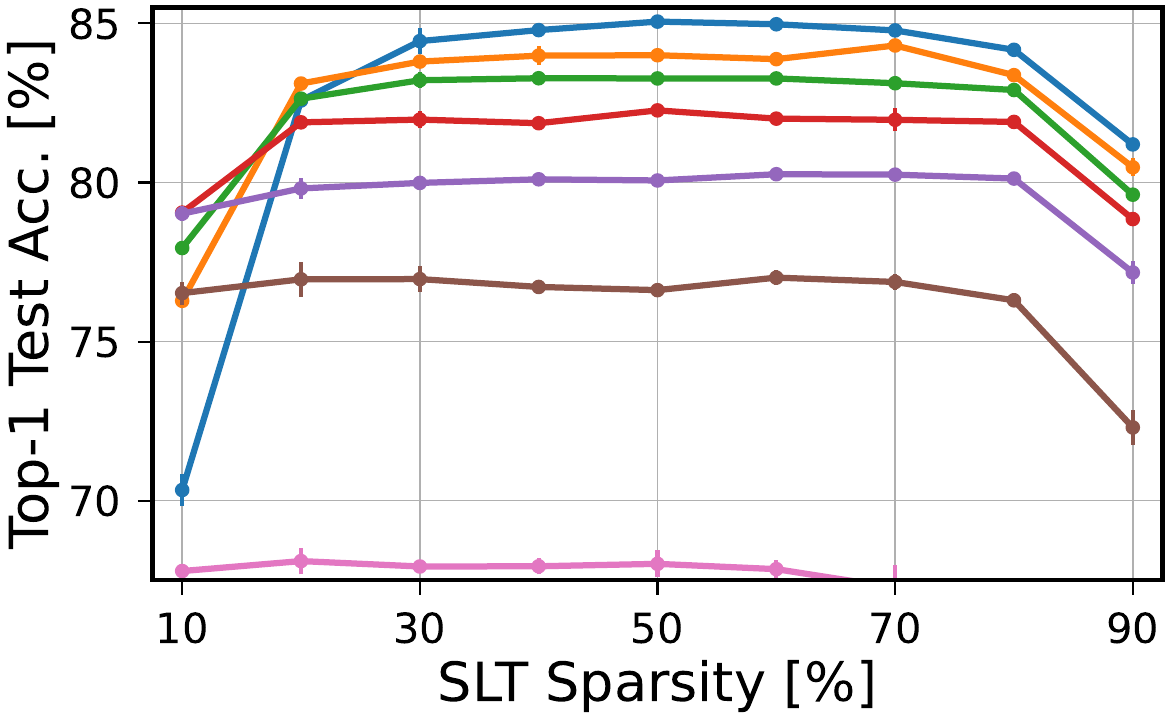}}
        \hfill
        \subfloat[\resnetx{18} (\cifar{10})\label{fig:sparsity_acc_resnet18}]
            {\includegraphics[width=0.333\columnwidth]{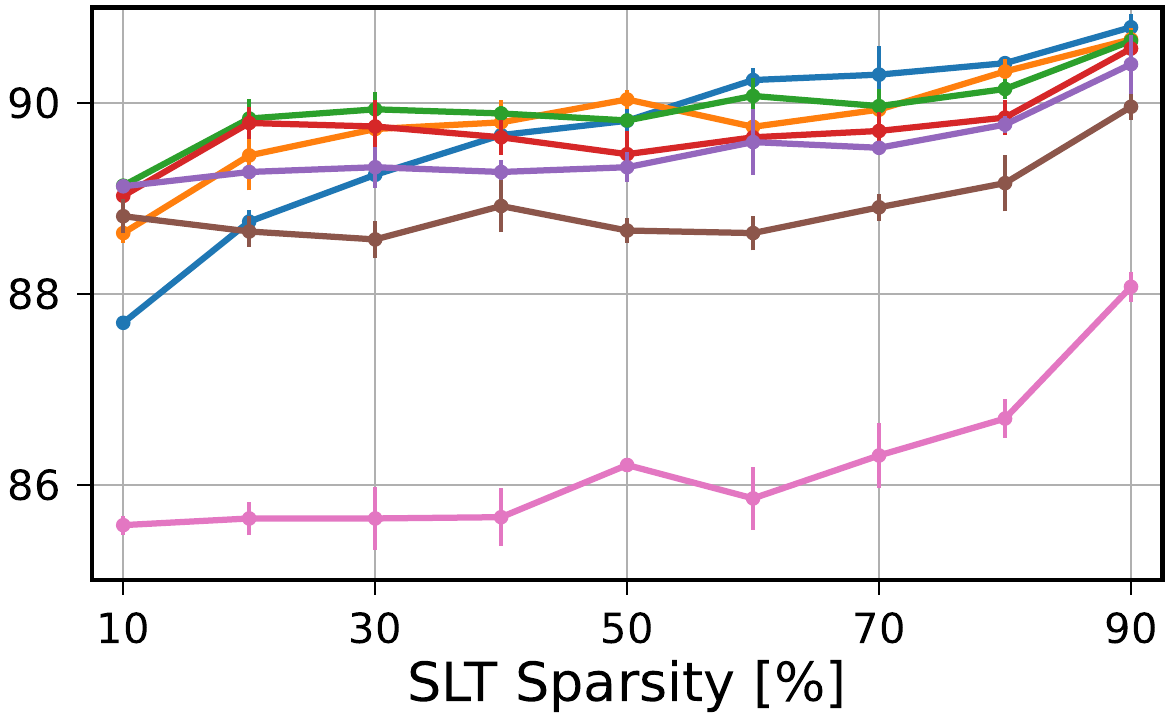}}
        \hfill
        \subfloat[GIN (OGBN-Arxiv)\label{fig:sparsity_acc_gin}]
            {\includegraphics[width=0.333\columnwidth]{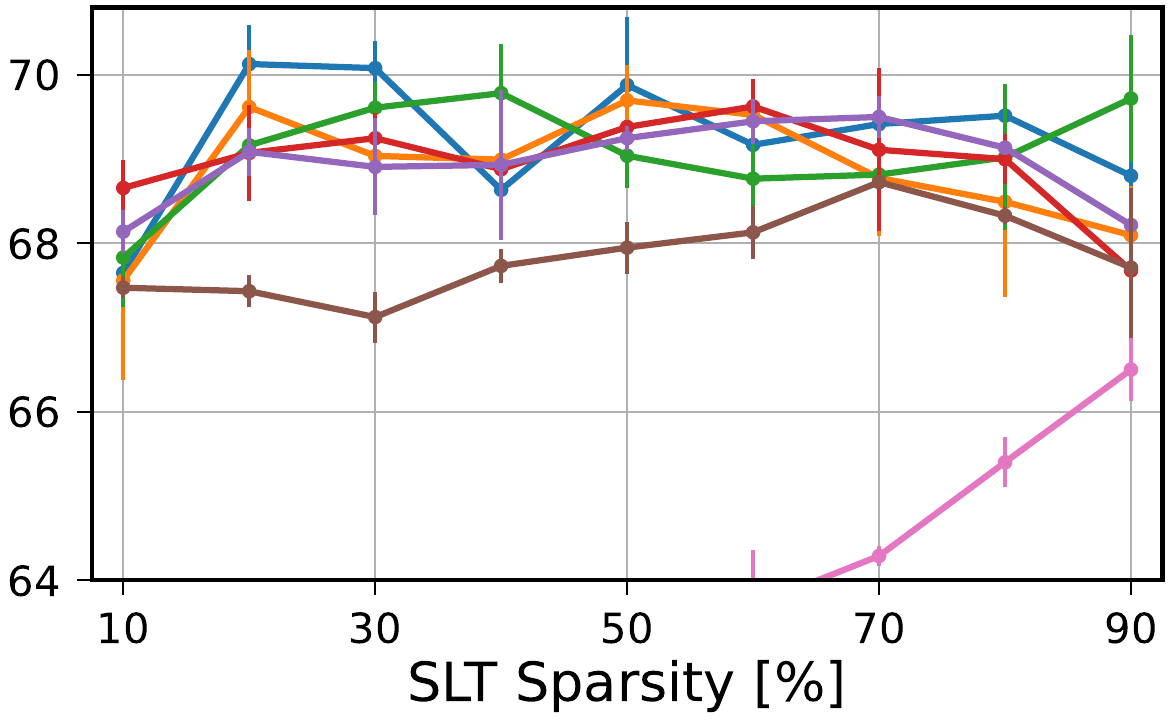}}
        \hfill
        \caption{
            Impact of the freezing ratio on different architectures.
            Pruning and locking ratios are set 
            following \cref{subsec:investigating_the_ratio_of_pruning_and_fixing_ratio}.
        }
        \label{fig:sparsity_acc}
    \end{figure*}
    
    This section demonstrates that freezing reduces the SLT memory size in a broad range of situations by evaluating it on image classification and graph node classification.
    We evaluate the impact of the freezing ratio on various network architectures, identifying three scenarios.
    Then, we explore \tradeoff{s} between accuracy and model memory size for different network widths and architectures.

    \subsection{Experimental Settings}\label{sec:experimental_settings}
        We evaluate the SLTs within frozen networks on image classification using the \cifar{10}~\citep{cifar} and ImageNet~\citep{ImageNet} datasets, and on node classification using the OGBN-Arxiv~\citep{ogbn-arxiv} dataset.
        \cifar{10} and ImageNet train data are split into training and validation sets with a 4:1 ratio, while for OGBN-Arxiv we use the default set split. 
        We test the models with the best validation accuracy and report the mean of three experiment repetitions for \cifar{10} and OGBN-Arxiv, and the result of one experiment for ImageNet.
        The standard deviation of experiments conducted more than once is plotted as error bars.
        
        For image classification we employ the VGG-like Conv6~\citep{vgg, edge_popup}, \resnet~\citep{ResNet}, and \wideresnet~\citep{wide_resnet} architectures, and for graph node classification the 4-layer modified GIN~\citep{GIN} architecture in~\citet{slt_UGT_gnn}, all implemented with no learned biases.
        \resnet and GIN use non-affine Batch Normalization~\citep{batch_norm}, while Conv6 has no normalization layers.
        Random weights are initialized with the Kaiming Uniform distribution, while weight scores are initialized with the Kaiming Normal distribution~\citep{kaiming_init}.
        
        SLTs are searched using an extension of \edgepopup~\citep{edge_popup} that enforces the desired SLT sparsity globally instead of per-layer~\citep{gem_miner}.
        On \cifar{10}, scores are optimized for $100$ epochs using stochastic gradient descent with momentum $0.9$, batch size $128$, weight decay $0.0001$, and initial learning rates of $0.01$ and $0.1$ for Conv6 and \resnetx{18}, respectively.
        On ImageNet, scores are optimized by the same setting as \resnetx{18} on \cifar{10}, but a $256$ batch size.
        On OBGN-Arxiv, scores are optimized for $400$ epochs using AdamW~\citep{AdamW} with weight decay $0.0001$ and initial learning rate $0.01$.
        All experiments use cosine learning rate decay~\citep{cosine_lr}.
        These can be adequately verified with two NVIDIA H100 SXM5 94GB GPUs.

    \subsection{Varying Desired SLT Sparsity at Different Freezing Ratios} \label{subsec:search_space_compresison_for_sparsity_of_SLT}
        This section investigates the effect of the freezing ratio of the network on desired SLT accuracy.
        We identify three scenarios, represented in \cref{fig:sparsity_acc}.
        
        In the cases where the optimal SLT sparsity is found at intermediate sparsity---\eg, around $50\%$ for Conv6 in \cref{fig:sparsity_acc_conv6}---pruning and locking can be applied with equally high ratios. 
        Results show that the supermask memory size can be reduced by $40\%$ with a small impact on accuracy, and by $70\%$ with still a moderate accuracy drop of $5$ points.

        Applying the much larger \resnetx{18} to the same task results in much stronger overparametrization. Therefore, optimal SLTs are found in the higher sparsity range, as revealed by \cref{fig:sparsity_acc_resnet18}, benefiting from much higher pruning than locking. 
        Even though $90 \%$ of the memory size is reduced in this scenario, we can find $90\%$ sparse SLTs with $88.1 \%$ accuracy.
        
        As an example of the scenario where optimal SLT sparsities are found in the denser range, benefiting from a higher locking ratio, we evaluate GIN in \cref{fig:sparsity_acc_gin}. 
        Compared with the best-performing SLT found in the dense GIN, of $70.1\%$ accuracy and $20\%$ sparsity, by freezing $50\%$ of the memory size, our method finds similarly performing SLTs of $69.8\%$ accuracy with $40\%$ sparsity.

        Interestingly, with low SLT sparsity (\eg, $10 \%$ sparsity) in all three scenarios, despite the reduced parameters to be optimized, SLTs within frozen networks achieve higher accuracy than SLTs within dense networks.
        These results imply that parameter freezing at an appropriate ratio has the effect of avoiding the inclusion of low-grade local optimal solutions in the search space.
        While searching for SLTs in a dense network by \edgepopup leads to convergence to a local optimal solution~\citep{plant_n_seek}, a moderate random parameter freezing may reduce the number of less accurate local optimal solutions and facilitate convergence to a more accurate local optimal solution within the reduced search space.
        In other words, we conjecture that if the network is properly frozen, the local optimal solution for the reduced search space is close to the global optimal solution of the entire search space.

    \begin{figure*}[t]
        \centering
        \subfloat
            {\includegraphics[width=.5\columnwidth]{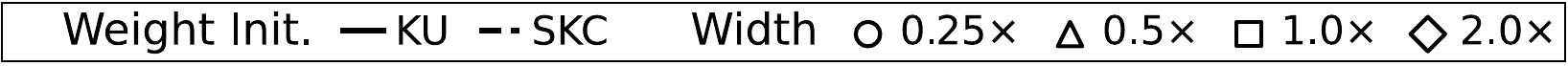}}\vfill
        \vspace{0.1cm} 
        \setcounter{subfigure}{0}
        \subfloat[Conv6 (SLT Sparsity $50\%$)\label{fig:model_size_acc_conv6}]
            {\includegraphics[width=0.333\columnwidth]{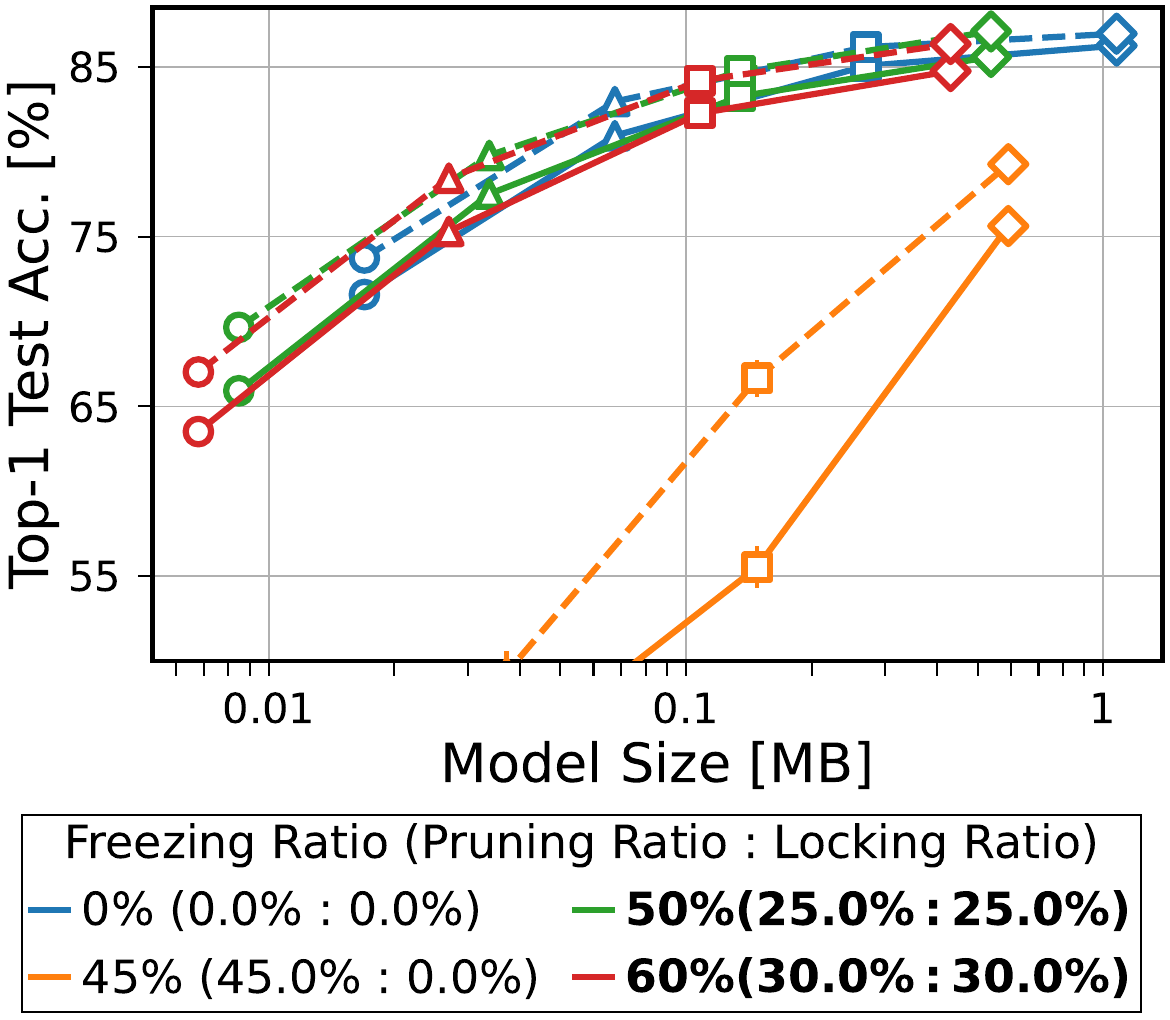}}
        \hfill
        \subfloat[\resnetx{18} (SLT Sparsity $90\%$)\label{fig:model_size_acc_resnet18}]
            {\includegraphics[width=0.333\columnwidth]{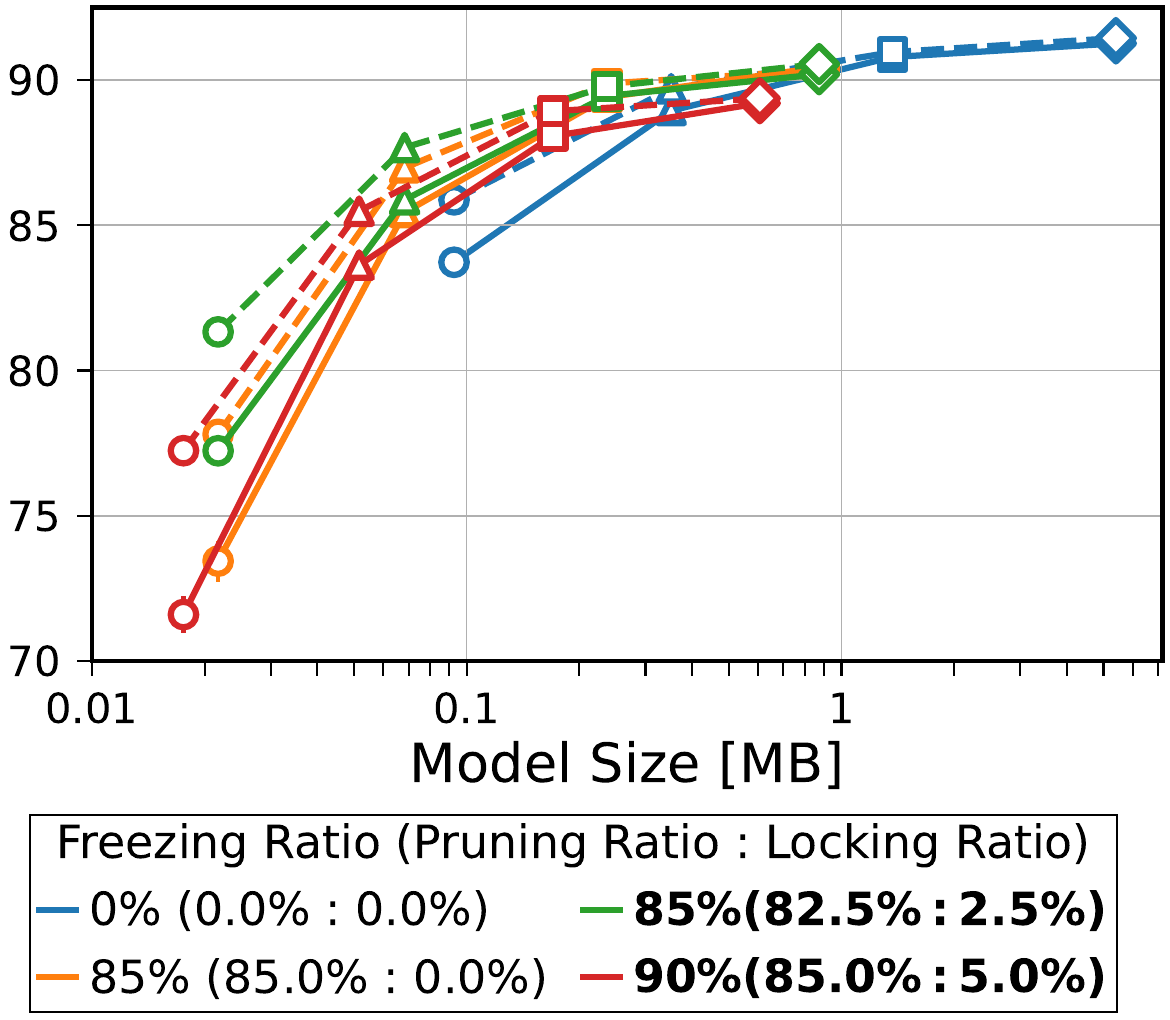}}
        \hfill
        \subfloat[GIN (SLT Sparsity $20\%$)\label{fig:model_size_acc_gnn}]
            {\includegraphics[width=0.333\columnwidth]{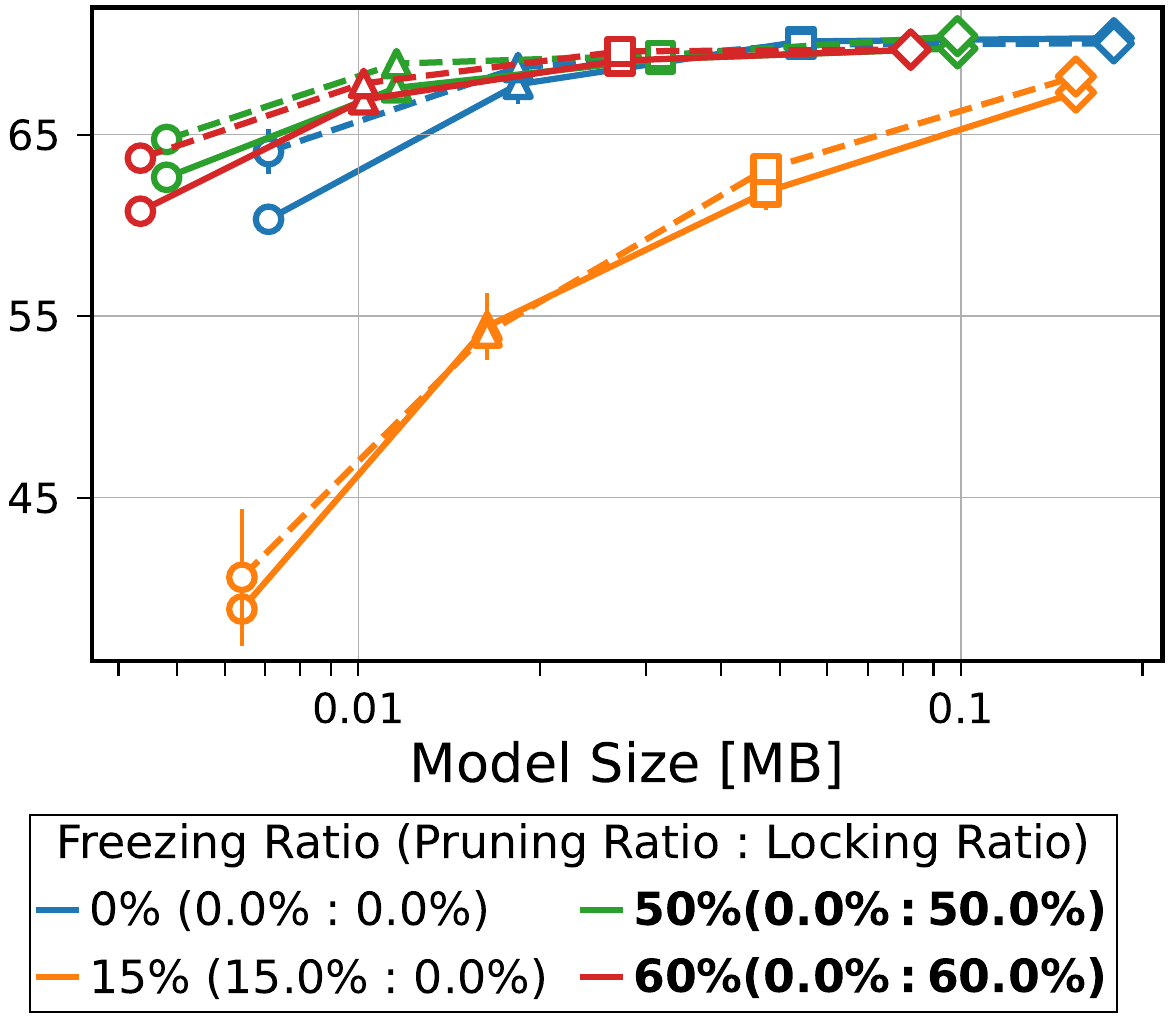}}
        \caption{
            Compared to sparse (\textcolor[HTML]{FF7F0E}{\ensuremath\bullet}) or dense (\textcolor[HTML]{1F77B4}{\ensuremath\bullet}) source networks, freezing achieves better accuracy-to-model memory size \tradeoff (top-left is better).
        }
        \label{fig:model_size_acc}
    \end{figure*}
        
    \subsection{Accuracy-to-Model Memory Size \TradeOff}\label{sec:acc_to_model_size_tradeoff}
        The freezing mask compression scheme proposed in \cref{subsec:proposed_method} allows to reduce the model memory size during inference by regenerating both the random weights and the freezing mask with random number generators.
        Here we consider this compression and investigate the accuracy-to-model size \tradeoff offered by the SLTs found in the frozen networks. 
        Model size refers to the total memory size of model parameters that need to be stored. The random weights and the frozen parts of the supermask are excluded, since they can be regenerated from seed, whereas each non-frozen element of the supermask and each learned batchnorm parameter are counted as 1 and 32 bits, respectively.
        Furthermore, we also compare the Kaiming Uniform (KU) weight initialization used so far with the binary weights provided by the Signed Kaiming Constant (SKC) initialization~\citep{edge_popup}, which can be exploited for reduced computational cost in neural engines~\citep{hiddenite}.
        For SKC, we scale weights by $1/\sqrt{1-k_l}$, where $k_l$ is the sparsity of each layer, as proposed by~\citet{edge_popup}.
        \add{
            Note that since we are interested in the accuracy of the SLTs compressed as much as possible obtained by each method, the pre-pruning ratio for the pruning-only case is determined as $(\textrm{target sparsity} - 5) \%$ to compress the SLT as much as possible while allowing to search for SLTs. On the other hand, we evaluate freezing with a higher compression ratio than pruning only. This setting means we evaluate our method under more strict conditions than pruning-only from the compression ratio perspective.
        }

        \cref{fig:model_size_acc} explores varying the width of the source network to analyze its impact on accuracy and model size.
        SLT sparsity is fixed to that of the best performing SLT found in a dense source network in \cref{fig:sparsity_acc}:
        $50\%$ in Conv6, $90\%$ in \resnetx{18}, and $20\%$ in GIN.
        Compared to the SLTs found in dense or sparse source networks, SLTs found by our method achieve similar or higher accuracy for similar or smaller model size, thus improving the accuracy-to-model size \tradeoff in all scenarios.

        Empirically, it is known that SLTs within an SKC-initialized dense network achieve better performance than with continuous random weights~\citep{edge_popup,m_sup,slt_fold_nn,slt_m-sup_folding_gnn}.
        Our results show that such a trend can also be observed in frozen source networks.
        Nonetheless, in all source networks, we find that this     
        improvement is smaller the wider the source network is, suggesting that the requirement for source networks of larger width is weaker in the case of binary weights. 

    \begin{table}[ht!]
        \centering
        \caption{Comparison between trained-weight networks and SLTs found in dense, sparse, and frozen networks.}
        \label{tab:comp_wtn_slt}
        \begin{tabular*}{\columnwidth}{@{\extracolsep{\fill}}lcrrrrr@{}}
        \toprule
        \multicolumn{7}{c}{\textsc{Conv6 \& \cifar{10}}} \\
        \midrule
        {\begin{tabular}[c]{@{}c@{}}Method\\(Source Net.)\end{tabular}} & 
        {\begin{tabular}[c]{@{}c@{}}Weight\\Init.\end{tabular}} & 
        {\begin{tabular}[c]{@{}c@{}}Sparsity\\$\text{[\%]}$\end{tabular}} & 
        {\begin{tabular}[c]{@{}c@{}}Pruning\\Ratio $\text{[\%]}$\end{tabular}} & 
        {\begin{tabular}[c]{@{}c@{}}Locking\\Ratio $\text{[\%]}$\end{tabular}} &
        {\begin{tabular}[c]{@{}c@{}}Top-1 Test\\Acc. $\text{[\%]}$\end{tabular}} & 
        {\begin{tabular}[c]{@{}c@{}}Model\\Size $\text{[MB]}$\end{tabular}} \\ 
        \midrule
        Weight Training         & KU & -      & -   & -   & $87.1$  & $8.63$ \\
        SLT (Dense)             & SKC & $50$ & $0$  & $0$ & $86.2$  & $0.27$ \\
        SLT (Sparse)            & SKC & $50$ & $45$ & $0$ & $66.7$  & $0.15$ \\
        \textbf{SLT (Frozen)}   & SKC & $50$ & $25$ & $25$ & \boldmath$84.8$  & \boldmath{$0.13$} \\
        \midrule
        \multicolumn{7}{c}{\textsc{\resnetx{18} \& \cifar{10}}} \\
        \midrule
        Weight Training       & KU  & -    & -    & -   & $92.4$      &    $42.63$  \\
        SLT (Dense)           & SKC & $90$ &  $0$ & $0$ & $91.0$      &     $1.37$  \\
        SLT (Sparse)          & SKC & $90$ & $85$ & $0$ & $89.9$      &     $0.24$  \\
        \textbf{SLT (Frozen)} & SKC & $90$ & $82.5$ & $2.5$ & \boldmath{$89.8$}      & $0.24$ \\
        \textbf{SLT (Frozen)} & SKC & $90$ & $85$ & $5$ & \boldmath{$88.9$}      & \boldmath{$0.17$} \\
        \midrule
        \multicolumn{7}{c}{\textsc{GIN \& OGBN-Arxiv}} \\
        \midrule
        Weight Training         & KU & -     & -    & -    &  $70.1$  & $1.452$  \\
        SLT (Dense)             & SKC & $20$ &  $0$ &  $0$ &  $70.0$  & $0.054$  \\
        SLT (Sparse)            & SKC & $20$ & $15$ &  $0$ &  $62.3$  & $0.047$  \\
        \textbf{SLT (Frozen)}   & SKC & $20$ & $0$  & $40$ &  \boldmath{$69.2$} & \boldmath{$0.036$} \\
        \midrule
        \multicolumn{7}{c}{\textsc{\resnetx{50} \& ImageNet}} \\
        \midrule
        Weight Training         & KU & -     & -    & -    &  $74.4$  & $97.49$  \\
        SLT (Dense)             & SKC & $80$ &  $0$ &  $0$ &  $66.8$  & $3.24$  \\
        SLT (Sparse)            & SKC & $80$ & $70$ &  $0$ &  $47.1$  & $1.11$  \\
        \textbf{SLT (Frozen)}   & SKC & $80$ & $65$  & $5$ &  \boldmath{$55.2$} & \boldmath{$1.11$} \\
        \midrule
        \multicolumn{7}{c}{\textsc{\resnetx{34} \& ImageNet}} \\
        \midrule
        SLT (Dense)             & SKC & $80$ &  $0$ &  $0$ &  $62.5$  & $2.66$  \\
        SLT (Sparse)            & SKC & $80$ & $70$ &  $0$ &  $40.7$  & $0.84$  \\
        \textbf{SLT (Frozen)}   & SKC & $80$ & $65$  & $5$ &  \boldmath{$51.2$} & \boldmath{$0.84$} \\
        \midrule
        \multicolumn{7}{c}{\textsc{\resnetx{18} \& ImageNet}} \\
        \midrule
        SLT (Dense)             & SKC & $80$ &  $0$ &  $0$ &  $54.0$  & $1.43$  \\
        SLT (Sparse)            & SKC & $80$ & $70$ &  $0$ &  $25.2$  & $0.45$  \\
        \textbf{SLT (Frozen)}   & SKC & $80$ & $65$  & $5$ &  \boldmath{$39.4$} & \boldmath{$0.45$} \\
        \midrule
        \multicolumn{7}{c}{\textsc{Wide \resnetx{50} \& ImageNet}} \\
        \midrule
        SLT (Dense)             & SKC & $80$ &  $0$ &  $0$ &  $70.8$  & $8.46$  \\
        SLT (Sparse)            & SKC & $80$ & $70$ &  $0$ &  $57.7$  & $2.72$  \\
        \textbf{SLT (Frozen)}   & SKC & $80$ & $65$  & $5$ &  \boldmath{$63.9$} & \boldmath{$2.72$} \\
        \bottomrule
        \end{tabular*}
    \end{table}

    \subsection{ImageNet Experiments}
        Finally, we evaluate our method using larger models on a large-scale dataset:
        deeper and wider \resnet{s} on ImageNet.
        Since SLTs with $80 \%$ sparsity achieve the highest accuracy in a dense source \resnetx{50} (for details, see \cref{sec:acc_dense_resnet50}), we compare the three methods using $80 \%$ SLT sparsity.
        
        \cref{fig:model_size_acc_imagenet} compares the accuracy-to-model size \tradeoff in finding the $80 \%$ sparsity SLTs between the proposed and conventional methods with SKC using ImageNet.
        Despite the more challenging setting and the significant $70 \%$ memory size reduction, our method (green) finds SLTs that are more accurate than pre-pruning-only methods (orange) for the same model size.
        This result demonstrates that the effective combination of parameter pruning and locking at initialization can improve the SLT memory efficiency even on large-scale datasets and models.

    \subsection{Result Analysis} 
        \cref{tab:comp_wtn_slt} summarizes the presented results and compares the proposed method to weight training and SLT search in dense and sparse source networks for the same network architecture and SLT sparsity.
        \add{
            As described in~\cref{sec:acc_to_model_size_tradeoff}, the model size is calculated as the sum of the unfrozen partial supermask (1 bit/parameter) and the batch normalization parameters (32 bits/parameter).
            Also, as described in~\cref{subsec:search_space_compresison_for_sparsity_of_SLT}, the target SLT sparsity is fixed to that of the best-performing SLT found in a dense source network, and we determine the pre-pruning ratio in sparse network as $(\textrm{SLT sparsity} - 5)\%$ for high compression.
            }
    
        In the case of Conv6, compared to weight training our method provides reductions of $66.4\times$ of the model size, in exchange of a small accuracy drop. Compared to the SLT found in the sparse source network, the SLT found in the frozen source network achieves much higher accuracy. 
    
        Even in the more challenging case of \resnetx{18}, SLTs found in a frozen network are $250.8\times$ smaller than the trained-weight model.
        When comparing SLTs found in sparse and frozen networks with the same $85\%$ freezing ratio, the accuracy is almost equivalent, demonstrating that the inclusion of locking does not introduce a degradation in accuracy even in the scenario that benefits more from pruning.
        
        Interestingly, the SLT found in the frozen GIN model achieves comparable accuracy to the trained-weight network, even though the model size is reduced by $40.3\times$.
        The SLT found within a dense network also achieves accuracy comparable to that of the trained-weight network, but the SLT found by our method is smaller.
        Generally, graph neural networks (GNNs) suffer from a generalization performance degradation due to the over-smoothing problem~\cite{over-smoothing}.
        Searching for SLTs within dense GNNs has been shown to mitigate the over-smoothing problem~\cite{slt_UGT_gnn} and achieve higher accuracy.
        As our method is similarly accurate compared to dense networks, it is possible that parameter freezing offers an even stronger solution to the over-smoothing problem.
    
        Although the ImageNet experiment falls into the scenario where a relatively sparse SLT is more accurate (like the \resnetx{18} experiment on \cifar{10}), the SLT found in a sparse network is significantly less accurate.
        On the other hand, the inclusion of parameter locking shows a significant improvement in SLT accuracy.
        These results suggest that the number of highly accurate SLT patterns decreases as the difficulty of the problem increases, and the pruning:locking proportion affects the performance of the found SLTs more severely.
    
\vspace{-1ex}
\section{Limitations}\label{sec:limitations}
Our results have the following limitations:
1) Although the proposed method can effectively reduce the model memory size for inference in principle, actual hardware implementation remains for future work. The previous work of the SLT-specialized neural engine, Hiddenite~\cite{hiddenite}, would be a promising direction for such future implementation.
2) This paper provides theoretical support for the existence of SLTs in frozen networks, but it does not provide their superiority in the approximation capability compared to random pruning, a special case of random freezing, which may depend on the sparsity nature of target networks.
Also, the theoretical consideration of the appropriate proportion of these ratios is left for future work.
3) Even though we demonstrate our method on various model architectures including CNNs, ResNet families and GIN \add{in~\cref{sec:experiments}}, following previous SLT work~\citep{supermask,edge_popup,slt_fold_nn,slt_UGT_gnn,slt_m-sup_folding_gnn},
we have to leave it for future work to apply our method to Transformers~\citep{transformer} because SLT itself has not yet been established for Transformers.
\section{Conclusion}\label{sec:conclution}
    This paper capitalizes on the fact that pre-pruning a randomly weighted network reduces the supermask memory size, but identifies that doing so limits the search to a suboptimal sparsity region.
    This problem is tackled by freezing (\ie, pruning or locking) some parameters at initialization, excluding them from the search.
    Freezing allows to the search for SLTs in the optimal sparsity region, while further reducing the model size.
    Experimental results show that SLTs found in frozen networks improve the accuracy-to-model size \tradeoff compared to SLTs found in dense~\citep{edge_popup, gem_miner} or sparse networks~\citep{slt_proof_random_pruning}.
    Interestingly, although for weight training random parameter locking has only been found useful for reducing accuracy degradation in network compression~\citep{supermask, freezenet}, we identify scenarios in SLT training where they can be used for raising accuracy.
    Our method can be interpreted as being capable of generating more useful SLT information from a random seed than previous methods, offering an opportunity for reducing off-chip memory access in specialized SLT accelerators~\citep{hiddenite}.
    Additionally, the reduced number of parameters to be optimized may be exploited for training cost reduction.
    \add{
        Appropriate freezing may improve the SLT search efficiency, including preventing convergence to local low-precision solutions as experimentally confirmed in~\cref{subsec:search_space_compresison_for_sparsity_of_SLT}, and improving convergence speed through reduced parameters to be optimized.
        Investigations about the SLT search efficiency remain as future work.
    }
    \FloatBarrier
\section*{Acknowledgments}
    This work was supported in part by JSPS KAKENHI Grant Number JP23H05489 and JST-ALCA-Next Japan Grant \# JPMJAN24F3.
\section*{Impact Statements}\label{sec:impact_statements}
    This paper presents work whose goal is to advance the field of machine learning.
    Among the many potential societal consequences of our work, we highlight its potential to reduce the computational cost of inference of neural networks, and thus their energy consumption footprint.

\bibliography{abrv, reference}
\bibliographystyle{tmlr}

\newpage
\appendix
\onecolumn

\section{Proof for Strong Lottery Tickets (SLTs) Existence in Frozen Networks}
\label{sec:proof_app}
This section describes the proofs of the lemma and theorem introduced in the manuscript.

We first extend the lemma of the subset-sum approximation shown by~\citet{subset_sum_lueker} to the case where some random variables are always included in the subset-sum.

\begin{lemma}[Subset-Sum Approximation in Randomly Locked Networks]\label{lemma:ssa_rl_app}
Let $X_1, \cdots, X_n \sim U(-1, 1)$ be independent, uniformly distributed random variables, and $M_1, \cdots, M_n\sim \mathrm{Ber}(q)$ be independent, Bernoulli distributed random variables with $q\in(0,1)$.
Let \add{$0<\varepsilon, \delta < 1$}. Then, with probability at least $1-\delta$, for any $z \in [-1, 1]$, there exist indices $I \subset \{1,\cdots,n\}$ such that $|z - \sum_{i=1}^n M_i X_i - \sum_{i\in I} (1-M_i)X_i| \leq \varepsilon$ if
\begin{equation}
    \add{n \geq C \log\left(\frac{C'}{\varepsilon\delta}\right),}
\end{equation}
\add{where $C, C'>0$ are constants, and $C$ depends on $q$.}
\end{lemma}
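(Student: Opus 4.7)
The plan is to reduce to the classical subset-sum approximation (\cref{lemma:ssa}) by conditioning on the locking mask $M$ and on the locked values. Write $L:=\{i:M_i=1\}$, $U:=\{i:M_i=0\}$, and $S_L:=\sum_{i\in L}X_i$. Since $(1-M_i)X_i=0$ for $i\in L$, the approximation condition $\bigl|z-\sum_{i=1}^{n}M_iX_i-\sum_{i\in I}(1-M_i)X_i\bigr|\le\varepsilon$ is equivalent to finding $I\subseteq U$ with $\bigl|(z-S_L)-\sum_{i\in I}X_i\bigr|\le\varepsilon$. In other words, once $M$ and $\{X_i\}_{i\in L}$ are fixed, the problem becomes a shifted subset-sum approximation with a random but frozen shift $S_L$ that is independent of the i.i.d.\ $U(-1,1)$ variables $\{X_i\}_{i\in U}$.

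First I would control two failure events. A Chernoff bound on $N_U:=|U|\sim\mathrm{Bin}(n,1-q)$ gives $N_U\ge(1-q)n/2$ with probability at least $1-\delta/3$ provided $n\ge C_1(q)\log(1/\delta)$. Conditionally on $L$, Hoeffding's inequality applied to the $|L|$-term sum $S_L$ (each summand bounded by $1$) yields $|S_L|\le R:=c\sqrt{n\log(1/\delta)}$ with probability at least $1-\delta/3$. On this good event the shifted target $z-S_L$ lies in the enlarged interval $[-1-R,\,1+R]$, independently of $\{X_i\}_{i\in U}$.

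Next I would apply a ``full-range'' extension of \cref{lemma:ssa}: for $N$ i.i.d.\ $U(-1,1)$ variables, with probability at least $1-\delta/3$ the subset sums $\varepsilon$-cover any target interval of radius $T\le c'N$, provided $N\ge C_2\log(T/(\varepsilon\delta))$. Setting $N=N_U=\Theta(n)$ and $T=1+R=O(\sqrt{n\log(1/\delta)})$, the two sufficient conditions $T\le c'N$ and $N\ge C_2\log(T/(\varepsilon\delta))$ both collapse to $n\ge C\log(C'/(\varepsilon\delta))$ with $C$ depending on $q$. A union bound over the three failure events then yields the claimed probability $1-\delta$ for the uniform-in-$z$ statement.

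\textbf{Expected obstacle.} The delicate point is that the shift $S_L$ grows like $\sqrt{n}$, so we must cover a target interval whose radius itself depends on $n$ rather than the fixed $[-1,1]$ of \cref{lemma:ssa}. Relying on the full-range form keeps the sample size requirement logarithmic, because $T$ appears only inside a logarithm; if one prefers to avoid invoking that extension, an equivalent route is to partition $U$ into two halves and use the first to approximately cancel $S_L$ (target $-S_L$) and the second to approximate $z\in[-1,1]$ via the basic \cref{lemma:ssa}, at the cost of slightly larger constants $C,C'$.
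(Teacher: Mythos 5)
Your proposal is correct, and it rests on the same two pillars as the paper's proof: Hoeffding bounds to control the mask count and the locked sum $S_L=\sum_i M_iX_i$, and Lueker's \emph{expanded-range} subset-sum corollary (Corollary~3.1 in the cited work) to handle a target that grows with $n$. The decomposition differs, though. The paper splits the unlocked variables into two pools $\{X'_i\}_{i\le n_1}$ and $\{X''_i\}_{i\le n_2}$, uses the first with the basic \cref{lemma:ssa} to cover $z\in[-1,1]$, and uses the second with the expanded-range corollary to cancel $S_L$, after bounding $|S_L|\le\alpha m$ (linear in $n$) and tuning $\alpha,\gamma$ so that $\alpha\beta n\le\gamma n_2$. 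You instead keep all of $U$ as a single pool and apply the expanded-range lemma once to the shifted target $z-S_L\in[-1-R,1+R]$, with the sharper bound $R=O(\sqrt{n\log(1/\delta)})$; the route the paper actually takes is exactly the two-half fallback you mention at the end. Your version is slightly cleaner (one covering event instead of two, and no need to choose the $\alpha,\gamma$ balance), at the minor cost of a self-referential $\log n$ inside the sample-size condition $N\ge C_2\log(T/(\varepsilon\delta))$, which is routine to absorb into the constants; the paper's split avoids that by making the covered radius $\gamma n_2$ scale linearly with the pool size, so its condition stays in the form $n_2\ge c\log(1/(\varepsilon\delta))$ directly. Both yield the stated $n\ge C\log(C'/(\varepsilon\delta))$ with $C$ depending on $q$.
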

\begin{proof}
    Let $M_1,\cdots, M_n \sim \mathrm{Ber}(q)$ and $m := \sum_i M_i$. By Hoeffding's inequality, we have
    \begin{equation}
        \add{\mathbb{P}\left( |m - qn| \leq \varepsilon_M n  \right) \geq 1 - 2 \exp\left(- \varepsilon_M^2 n \right)}
    \end{equation}
    for $\varepsilon_M > 0$.
    Thus, if we set $n \geq \frac{\log(2/\delta)}{\varepsilon_M^2}$ and $\varepsilon_M := \frac{\min(q, 1-q)}{2}$, we have
    \begin{equation}
        (2q-\beta)n \leq m \leq \beta n
    \end{equation}
    with $\beta := q + \varepsilon_M = \min(\frac{3q}{2}, \frac{1+q}{2}) \in (0,1)$,  with probability at least $1-\delta$. In particular, the number of non-vanishing terms in the sum $\sum_{i\in I} (1-M_i)X_i$ is $n - m \geq (1-\beta)n$ as long as each $X_i$ is non-zero.
    
    Now fix $M_1,\cdots, M_n \sim \mathrm{Ber}(q)$ with $(2q-\beta)n \leq m \leq \beta n$. The goal is to approximate $z \in [-1, 1]$ and $\sum_{i=1}^n M_i X_i$ by subset sum from $\{(1-M_i)X_i : M_i = 0\}$.
    For simplicity, we split it into two parts:
    \begin{equation}\label{eq: split of X_1, ..., X_n}
    \{(1-M_i)X_i : M_i = 0\} = \{X'_1, \cdots, X'_{n_1}\} \cup \{X''_1, \cdots, X''_{n_2}\},    
    \end{equation}
    where the former part is used for approximating $z\in [-1,1]$ and the latter part for approximating $\sum_{i=1}^n M_i X_i$. 

    To approximate $z\in [-1,1]$, we can directly apply Corollary 2.5 from \citet{subset_sum_lueker}:
    \begin{align}
        \mathbb{P}\left( \forall z \in \left[-1, 1\right],
                    \exists I \subset \{1,\cdots,n_1\} \text{ s.t. } 
                    |z - \sum_{i\in I} X'_i| \leq \varepsilon
                    \right) \geq 1 - \delta \label{eq: approximation of z in [-1, 1]} 
    \end{align}
    whenever \add{$n_1 \geq 4C \log(8 / \varepsilon\delta)$.}

    To approximate $\sum_{i=1}^n M_i X_i$, we have to evaluate its norm.
    By Hoeffding's inequality on $X_i$'s with $M_i \not = 0$, we have
    \begin{equation}\label{eq: evaluation of norm of sum of M_iX_i}
    \mathbb{P}\Big(\Big|\sum_{i=1}^n M_i X_i\Big| \leq \alpha m\Big) \geq 
                                                1 - 2 \exp\left(-\frac{3\alpha^2m}{2}\right),
    \end{equation}
    for any fixed $\alpha > 0$, whose value will be specified later.
    Thus $\Big|\sum_{i=1}^n M_i X_i\Big| \leq \alpha m \leq \alpha \beta n$ holds with probability at least $1-\delta$ whenever $m \geq \frac{2\log(2/\delta)}{3\alpha^2}$. Since $m \leq \beta n$ holds, $n \geq \frac{2\log(2/\delta)}{3\alpha^2\beta}$ is enough.

    From the proof of Corollary 3.1 in \citet{subset_sum_lueker}, for any $\gamma \in (0,\frac{1}{4})$, we know that 
    \begin{align}
        \mathbb{P}\left( \forall z \in \left[-\gamma n_2, \gamma n_2\right],
                    \exists I \subset \{1,\cdots,n_2\} \text{ s.t. } 
                    |z - \sum_{i\in I} X''_i| \leq \varepsilon
                    \right) \label{eq: subset sum approximation with expanded range} \\
                    \geq 1 - \delta - 2\exp \left(-\frac{(1-4\gamma )^2 n_2-4}{32}\right) 
    \end{align}
    whenever \add{$n_2 \geq 2 C \log (\frac{2}{\varepsilon \delta})$.} Thus if
    \begin{equation}\label{eq: requirement of n_2 for subset sum approximation}
        \add{n_2 \geq  \max\left(2C\log\left(\frac{2}{\varepsilon\delta}\right), \frac{C'}{(1-4\gamma)^2}\log\left(\frac{C''}{\delta}\right) \right)}
    \end{equation}
    holds, we can approximate any $z \in [-\gamma n_2, \gamma n_2]$ by subset sum of $X''_1,\cdots,X''_{n_2}$ with probability $1-2\delta$.
    
    Now we assume $(\alpha\beta/\gamma)n \leq n_2$. \add{For example, if we set $\alpha=\frac{1-\beta}{10-\beta}$ and $\gamma=\frac{1}{5}$, the assumption is satisfied when we split $\{(1-M_i)X_i\}$ by $n_1 = n_2 = \frac{(1-\beta)n}{2}$ in (\ref{eq: split of X_1, ..., X_n}).}
    Then the approximation (\ref{eq: subset sum approximation with expanded range}) can be applied to $z = \sum_{i=1}^n M_i X_i$ with high probability since $|\sum_{i=1}^n M_i X_i| \leq \alpha m \leq \alpha \beta n \leq \gamma n_2$ holds.

    By combining (\ref{eq: approximation of z in [-1, 1]}), (\ref{eq: evaluation of norm of sum of M_iX_i}),  (\ref{eq: requirement of n_2 for subset sum approximation}) with the fact that $n = m + n_1 + n_2$, and replacing $\varepsilon$ and $\delta$ with $\varepsilon/2$ and $\delta/6$, respectively, we obtain the desired results.
\end{proof}

By similar arguments as in \citet{subset_sum_lueker} or \citet{slt_proof_subset_sum}, we can easily generalize Lemma~\ref{lemma:ssa_rl_app} to the case where the distribution followed by $X_i$ contains the uniform distribution.
\add{Also, by using the same proof procedure, we can prove the following extension of Lemma~\ref{lemma:ssa_rl_app}:}

\begin{lemma}[Subset-Sum Approximation in Frozen Networks.]\label{lemma:ssa_pf_app}
    Let $X_1, ..., X_n$ be independent, uniformly distributed random variables so that $X \sim U(-1, 1)$, $M_1, ..., M_n$ be independent, Bernoulli distributed random variables so that $M_i \sim Ber(p)$ for $p \in (0,1)$, and $M'_1, ..., M'_n$ be independent, Bernoulli distributed random variables so that $M'_i \sim Ber(q)$ for $q\in (0,1)$. Let $\varepsilon, \delta \in (0, 1)$ be given.
    Then for any $z \in [-1, 1]$ there exists a subset $I \subseteq \{1, ..., n\}$ so that with probability at least $1-\delta$ we have $\left|z - \sum_{i=1}^n M_i M'_i X_i - \sum_{i \in I} M_i \left(1-M'_i\right) X_i\right|\le \varepsilon$ if
    \begin{align}
    n \geq C \log\left(\frac{C'}{\varepsilon\delta}\right),
    \end{align}
    \add{where $C, C'>0$ are constants, and $C$ depends on $p$ and $q$.}
\end{lemma}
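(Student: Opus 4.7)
The plan is to reduce Lemma~\ref{lemma:ssa_pf_app} to the already-established Lemma~\ref{lemma:ssa_rl_app} by conditioning on the pre-pruning mask and exploiting independence. Observe that $M_i = 0$ means $X_i$ contributes to neither the fixed sum $\sum_i M_i M_i' X_i$ nor the candidate pool $\{M_i(1-M_i')X_i\}$, so pruned indices simply disappear from the problem. On the unpruned set $S := \{i : M_i = 1\}$, the variables $X_i$ remain i.i.d.\ uniform on $[-1,1]$ and the lockings $M_i'$ remain i.i.d.\ Bernoulli$(q)$, by independence of $(M_i)$, $(M_i')$ and $(X_i)$. On this restricted family the expression to bound is exactly $\sum_{i\in S} M_i' X_i + \sum_{i \in I} (1-M_i') X_i$ with $I \subseteq S$, which is the setting handled by Lemma~\ref{lemma:ssa_rl_app}.

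First I would apply Hoeffding's inequality to $|S| = \sum_i M_i \sim \mathrm{Binomial}(n,p)$ to show that, for $n \geq \frac{2}{p^2}\log(2/\delta)$, the bound $|S| \geq pn/2$ holds with probability at least $1-\delta/2$. Second, conditionally on $\{M_i\}$ with $|S|$ satisfying this lower bound, I would invoke Lemma~\ref{lemma:ssa_rl_app} applied to the $|S|$ variables indexed by $S$ with locking parameter $q$, using failure probability $\delta/2$ and accuracy $\varepsilon$. That lemma guarantees the needed approximation provided $|S| \geq C_q \log(C'/(\varepsilon\delta))$ for a constant $C_q$ depending only on $q$. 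Since $|S| \geq pn/2$, it suffices to take
\begin{equation}
n \;\geq\; \max\!\left(\frac{2}{p^2}\log\!\frac{2}{\delta},\; \frac{2 C_q}{p}\log\!\frac{C'}{\varepsilon\delta}\right),
\end{equation}
which can be absorbed into a single bound of the form $n \geq C \log(C'/(\varepsilon\delta))$ with $C$ depending on both $p$ and $q$. A union bound over the two failure events yields total failure probability at most $\delta$.

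The main step is then just rewriting the conclusion: for the $I \subseteq S$ produced by Lemma~\ref{lemma:ssa_rl_app} we have $\sum_{i\in S} M_i' X_i = \sum_{i=1}^n M_i M_i' X_i$ and $\sum_{i\in I}(1-M_i')X_i = \sum_{i\in I} M_i(1-M_i')X_i$ (since $M_i = 1$ for all $i \in I$), which is the desired form. I expect no serious obstacle here; the one place that requires mild care is checking that conditioning on the specific realization $\{M_i\}$ preserves the i.i.d.\ uniform distribution of $\{X_i\}_{i\in S}$ and the i.i.d.\ Bernoulli$(q)$ distribution of $\{M_i'\}_{i\in S}$, which follows immediately from the assumed mutual independence of the three families. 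The dependence of the final constant $C$ on both $p$ and $q$ arises transparently from the two bounds combined above, matching the statement of the lemma.
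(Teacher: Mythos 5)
Your reduction is correct and matches the paper's approach: the paper proves the locked-only case in detail (Lemma~\ref{lemma:ssa_rl_app}) and then asserts the frozen case follows ``by the same proof procedure,'' which is exactly what your conditioning-on-the-pruning-mask argument makes rigorous --- pruned indices vanish from both the forced sum and the candidate pool, Hoeffding gives $|S|\ge pn/2$ with probability $1-\delta/2$, and the locked lemma applies to the surviving i.i.d.\ variables with the constant picking up a factor of $1/p$. Your route also correctly sidesteps the pitfall the paper explicitly warns against (setting $z' = z - \sum_i M_iM_i'X_i$ and invoking the deterministic-target corollaries of \citet{subset_sum_lueker}), since the stochastic locked sum is handled inside Lemma~\ref{lemma:ssa_rl_app} itself rather than folded into the target.
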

\add{
    These corollaries seem to be derived by taking $z' := z - \sum_{i=1}^{n}M_iX_i$ as the target of the \citet{subset_sum_lueker} and \citet{slt_proof_random_pruning} corollaries. However, it cannot.
    Their corollaries need the target to be deterministic and have constant bounds. On the other hand, since $\sum_{i=1}^{n}M_iX_i$ is stochastic, $z'$ is a stochastic value, and the bounds of $z'$ are not constant. Therefore, $z'$ cannot be directly assigned to the conventional corollaries, and our approach is necessary for proofing the subset-sum approximation with frozen variables.
}

Finally, by these extended lemmas, we can prove the SLT existence within the frozen network with a sufficiently large width. 

\begin{theorem}[SLT Existence in Frozen Networks]
    \label{theorem:slt_pf_app}
    Let $\mathcal{D}$ be input data, $f_T$ be a target network  with depth $L$, and $f_S$  be a source network  with depth $L+1$, $\bm{\theta}_S$ be a parameter of $f_S$, and $p_l, q_l \in (0,1)$.
    Assume that these networks have ReLU activation functions, and each element of $\bm{\theta}_S$ is uniformly distributed over $[-1, 1]$.
    Also assume that $\bm{\theta}_S$ is randomly pruned and locked with pruning ratio $p_l$ and locking ratio $q_l$ for each $l$-th layer, except for $0$-th and $1$-st layers.
    Then, with probability at least $1-\delta$, there exists a binary mask $\bm{m}_S$ so that each output component $i$  is approximated as $\max_{\bm{x}\in\mathcal{D}}\left\|f_{T, i}(\bm{x}) - f_{S, i}(\bm{x}; \bm{\theta}_S \odot \bm{m}_S)\right\|<\varepsilon$ if
    \add{
        \begin{align}
            n_{S, l} &\ge C_l\log\left(\frac{C'_l}{\min\{\varepsilon_{l}, \delta/\rho\}}\right) n_{T,l}, & (l\geq 1) \\
            n_{S,0} &\ge C_0\log\left(\frac{C'_0}{\min\left\{\varepsilon_1, \delta/\rho\right\}}\right)d & (l = 0)
        \end{align}
    }
\add{
    where $C_l, C_l' >0$ are constants, and each $C_l$ includes $p_l$ and $q_l$, except for $l=0$. 
    Also, $\varepsilon_l$ and $\rho$ are as defined in \citet{slt_proof_act,slt_proof_random_pruning}.
    }
\end{theorem}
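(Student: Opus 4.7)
The plan is to follow the proof template of Theorem~2.5 in \citet{slt_proof_random_pruning} essentially verbatim, with \cref{lemma:ssa_pf} (Subset-Sum Approximation in Frozen Networks) taking over the role played by their sparse subset-sum lemma \cref{lemma:ssa_rp}. The backbone of that template, in turn, is the block construction of \citet{slt_proof_act}: each target weight $w^T_{ij}$ in layer $l$ is realized by a block of $n_{S,l}$ source weights whose subset-sum approximates $w^T_{ij}$, and the extra depth of the source network together with the ReLU identity $x = \sigma(x) - \sigma(-x)$ is used to emulate arbitrary signs via two consecutive source layers.

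First I would fix this block-wise construction and identify, for each target weight, the corresponding block of independent random source weights $X_1,\ldots,X_{n_{S,l}}$, each independently equipped with a pruning mask $M_i \sim \textrm{Ber}(1-p_l)$ and a locking mask $M'_i \sim \textrm{Ber}(q_l)$. \cref{lemma:ssa_pf} then guarantees that, with failure probability at most $\delta_0$, any $z\in[-1,1]$ admits indices $I$ such that $\bigl|z - \sum_i M_i M'_i X_i - \sum_{i\in I} M_i(1-M'_i)X_i\bigr|\le \varepsilon_l$, provided $n_{S,l}\ge C_l\log(C'_l/(\varepsilon_l\delta_0))$ with $C_l$ depending on $p_l$ and $q_l$. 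The supermask is obtained by selecting exactly the indices in $I$ within each block; locked indices are taken automatically and pruned indices are unavailable, which is precisely the setting that \cref{lemma:ssa_pf} was designed for.

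Second, I would apply a union bound over all $\rho$ blocks in the construction, set $\delta_0 = \delta/\rho$, and derive the stated width bounds $n_{S,l}\ge C_l\log(C'_l/\min\{\varepsilon_l,\delta/\rho\})\cdot n_{T,l}$; the $n_{T,l}$ factor arises because the number of blocks needed in layer $l$ scales linearly with $n_{T,l}$. The zeroth and first source layers are exempted from the pruning and locking ratios in the hypothesis, so their width bounds reduce to those of \citet{slt_proof_act}, with $d$ in place of $n_{T,l}$ for $l=0$. Propagating the per-weight $\varepsilon_l$ errors through the layered construction using the standard Lipschitz estimate over $\mathcal{D}$, already carried out in \citet{slt_proof_act,slt_proof_random_pruning}, then yields the global bound $\max_{\bm{x}\in\mathcal{D}}\|f_{T,i}(\bm{x}) - f_{S,i}(\bm{x};\bm{\theta}_S\odot\bm{m}_S)\|<\varepsilon$ claimed in the theorem.

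The main obstacle I anticipate is bookkeeping rather than conceptual novelty: ensuring that the locking constraint is compatible with the two-branch $\sigma(x)-\sigma(-x)$ construction. A locked weight is forced into the SLT irrespective of which sign-branch it sits in, but since the two branches draw from disjoint source-weight blocks with independent locking masks, \cref{lemma:ssa_pf} applies to each branch separately and the construction survives. Tracking how the constants $C_l, C'_l$ degrade as $\min(p_l,1-p_l)$ or $\min(q_l,1-q_l)$ approaches zero, and verifying that the resulting bound remains non-trivial throughout the interior regime $p_l,q_l\in(0,1)$ assumed in the hypothesis, are the only places where the frozen case differs substantively from the purely sparse argument; as the theorem explicitly disclaims sharpness of the probability bound, these constants can be absorbed without further optimization.
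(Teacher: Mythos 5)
Your proposal follows exactly the route the paper takes: substitute the frozen subset-sum lemma (\cref{lemma:ssa_pf}) for the sparse subset-sum lemma in the block-wise construction of \citet{slt_proof_act} as adapted by \citet{slt_proof_random_pruning}, then union-bound over the $\rho$ blocks with $\delta/\rho$ per block. The paper's own proof is a one-line reduction to that substitution, so your more detailed account (including the check that locking is compatible with the two-branch $\sigma(x)-\sigma(-x)$ construction) is a faithful and somewhat more explicit version of the same argument.
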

\begin{proof}
By following the proof procedure of~\citet{slt_proof_random_pruning} using our Lemma~\ref{lemma:ssa_pf_app} instead of Lemma~2.4 in~\citet{slt_proof_random_pruning}, we can obtain the desired result as an extension of Theorem~2.5 in \citet{slt_proof_random_pruning}.
\end{proof}

\newpage
\section{Additional Experimental Results}\label{sec:additional_exp_results}

\subsection{Accuracy of SLT within Dense ResNet-50}\label{sec:acc_dense_resnet50}
    This section introduces the preliminary experiment used for determining the main experimental setup.
    \cref{fig:sparsity_acc_imagenet} compares the accuracy of SLTs within a dense \resnetx{50} source at different SLT sparsity using ImageNet.
    The SLT with $80 \%$ sparsity is the most accurate, reaching $66.8 \%$ accuracy.
    \begin{figure}[h]
            \centering
            \includegraphics[width=0.4\columnwidth]{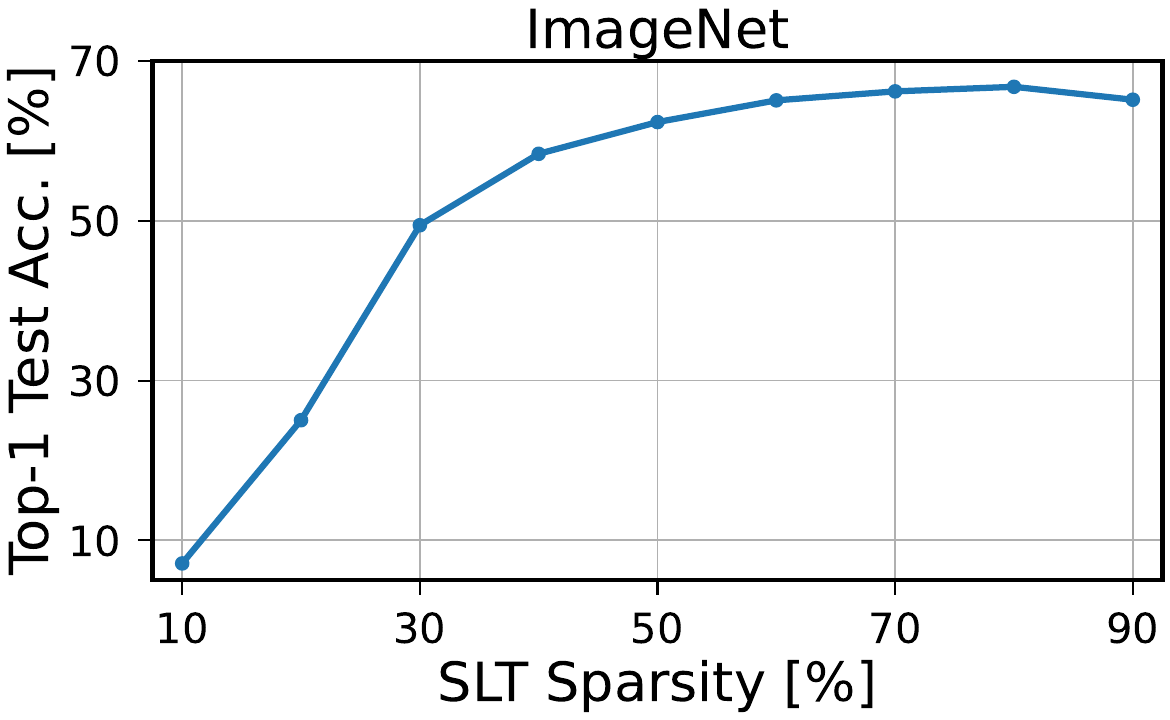}
            \caption{
                Accuracy comparison of SLTs of different sparsity within a dense \resnetx{50} on ImageNet.
            }
            \label{fig:sparsity_acc_imagenet}
    \end{figure}

\subsection{Accuracy of SLT within Transformer Architectures}
    \add{
        In this section, we search for SLTs within DeiT-S~\citep{deit}, the Transformer architecture for vision tasks, using ImageNet.
        The experimental setup is same as in the DeiT paper, and we use \edgepopup for finding SLTs.
    }

    \add{
        As shown in~\cref{tab:comp_wtn_slt_deit}, compared to the existing SLT search methods, we can find more accurate SLTs from the frozen network, even though the model size is the same or smaller. 
        However, the SLT accuracies are significantly lower than that of the weight-trained network. While CNN and GNN maintain some SLT accuracy regardless of network pre-processing (\ie, non-frozen, pruning only, or frozen), as shown in~\cref{tab:comp_wtn_slt}, DeiT shows a significant decrease in SLT accuracy even without freezing. 
        It suggests there are some problems specific to the Transformer architecture with respect to SLT. As mentioned in~\cref{sec:limitations}, SLT within Transformer architectures has not yet been established experimentally and theoretically, and investigating this degradation is left for future work.
    }

\begin{table}[ht!]
    \centering
    \caption{SLT accuracy comparison in the DeiT-S architecture.}
    \label{tab:comp_wtn_slt_deit}
    \begin{tabular*}{\columnwidth}{@{\extracolsep{\fill}}lcrrrrr@{}}
        \toprule
        \multicolumn{7}{c}{\textsc{DeiT-S \& ImageNet}} \\
        \midrule
        {\begin{tabular}[c]{@{}c@{}}Method\\(Source Net.)\end{tabular}} & 
        {\begin{tabular}[c]{@{}c@{}}Weight\\Init.\end{tabular}} & 
        {\begin{tabular}[c]{@{}c@{}}Sparsity\\$\text{[\%]}$\end{tabular}} & 
        {\begin{tabular}[c]{@{}c@{}}Pruning\\Ratio $\text{[\%]}$\end{tabular}} & 
        {\begin{tabular}[c]{@{}c@{}}Locking\\Ratio $\text{[\%]}$\end{tabular}} &
        {\begin{tabular}[c]{@{}c@{}}Top-1 Test\\Acc. $\text{[\%]}$\end{tabular}} &
        {\begin{tabular}[c]{@{}c@{}}Model\\Size $\text{[MB]}$\end{tabular}} \\ 
        \midrule
        Weight Training         & KU & -      & -   & -   & $78.4$  & $83.59$ \\
        SLT (Dense)             & SKC & $60$ & $0$  & $0$ & $51.8$  & $2.61$ \\
        SLT (Sparse)            & SKC & $60$ & $50$ & $0$ & $10.3$  & $1.31$ \\
        \textbf{SLT (Frozen)}   & SKC & $60$ & $35$ & $15$ & \boldmath$40.2$  & \boldmath{$1.31$} \\
        \textbf{SLT (Frozen)}   & SKC & $60$ & $40$ & $20$ & \boldmath$36.9$  & \boldmath{$1.04$} \\
        \bottomrule
    \end{tabular*}
\end{table}

\end{document}